\gdef\@copyrightpermission{
  \begin{minipage}{0.2\columnwidth}
   \href{https://creativecommons.org/licenses/by/4.0/}{\includegraphics[width=0.90\textwidth]{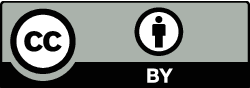}}
  \end{minipage}\hfill
  \begin{minipage}{0.8\columnwidth}
   \href{https://creativecommons.org/licenses/by/4.0/}{This work is licensed under a Creative Commons Attribution International 4.0 License.}
  \end{minipage}
  \vspace{5pt}
}
\title{Neural DNF-MT: A Neuro-symbolic Approach for\\Learning Interpretable and Editable Policies}
\author{Kexin Gu Baugh}
\affiliation{
  \institution{Imperial College London}
  \city{London}
  \country{United Kingdom}}
\email{kexin.gu17@imperial.ac.uk}
\author{Luke Dickens}
\affiliation{
  \institution{University College London}
  \city{London}
  \country{United Kingdom}}
\email{l.dickens@ucl.ac.uk}
\author{Alessandra Russo}
\affiliation{
  \institution{Imperial College London}
  \city{London}
  \country{United Kingdom}}
\email{a.russo@imperial.ac.uk}
\begin{abstract}

Although deep reinforcement learning has been shown to be effective, the model's
black-box nature presents barriers to direct policy interpretation. To address
this problem, we propose a neuro-symbolic approach called neural DNF-MT for
end-to-end policy learning. The differentiable nature of the neural DNF-MT model
enables the use of deep actor-critic algorithms for training. At the same time,
its architecture is designed so that trained models can be directly translated
into interpretable policies expressed as standard (bivalent or probabilistic)
logic programs. Moreover, additional layers can be included to extract abstract
features from complex observations, acting as a form of predicate invention. The
logic representations are highly interpretable, and we show how the bivalent
representations of deterministic policies can be edited and incorporated back
into a neural model, facilitating manual intervention and adaptation of learned
policies. We evaluate our approach on a range of tasks requiring learning
deterministic or stochastic behaviours from various forms of observations. Our
empirical results show that our neural DNF-MT model performs at the level of
competing black-box methods whilst providing interpretable policies.

\end{abstract}
\keywords{Neuro-symbolic Learning; Neuro-symobilc Reinforcement Learning}
\newcommand{\BibTeX}{\rm B\kern-.05em{\sc i\kern-.025em b}\kern-.08em\TeX}
\newcommand{\indicatorf}{\mathbbm{1}}
\newcommand{\indicatorfof}[1]{\indicatorf_{#1}}
\newcommand{\mutextanh}{\text{mutex-tanh}}
\newcommand{\softmax}{\text{softmax}}
\newtheorem{definition}{Definition}[section]
\theoremstyle{definition}\newtheorem{remark}{Remark}
\begin{document}


\pagestyle{fancy}
\fancyhead{}


\maketitle


\section{Introduction}


Remarkable progress has been made in reinforcement learning (RL) with the
advancement of deep neural networks. Since the demonstration of impressive
performance in complex games like Go \cite{alphago} and Dota 2
\cite{open-ai-five}, significant effort has been made to utilise deep RL
approaches for solving real-life problems, such as segmenting surgical gestures
\cite{surgical-gesture} and providing treatment decisions \cite{q-value-sepsis}.
However, the need for model interpretability grows with safety and ethical
considerations. In the EU's AI Act, systems used in areas such as healthcare
fall into the high-risk category, requiring both a high level of accuracy and a
method to explain and interpret their output\cite{ai-act}. Therefore, the
`black-box' nature of neural models becomes a concern when using them for such
high-stakes decisions in healthcare \cite{model-interpretability-healthcare}.
While many approaches exist to \textit{explain} black-box neural models with
post-hoc methods, it is argued that using inherently \textit{interpretable}
models is safer \cite{interpretable-model}.


Various neuro-symbolic approaches address the lack of interpretability in deep
RL. We use the term `symbolic' to refer to methods that offer \textit{logical
rule representations}, in contrast to program synthesis approaches \cite{pirl,
propel, galois} that offer \textit{programmatic representations} with forms of
logic. Some of these neuro-symbolic methods \cite{nlrl, nudge} rely on manually
engineered inductive bias to restrict the search space and thus limit the rules
they can learn. Others \cite{nsrl-fol, dlm} without predefined inductive bias
associate weights with predicates but require pre-trained components to parse
observations to predicates \cite{nsrl-fol} or a special critic for training
\cite{dlm}.


In this paper, we propose a neuro-symbolic model, neural DNF-MT, for
learning interpretable and editable policies.%
\footnote{Our main experiment repo is available at
      \url{https://github.com/kittykg/neural-dnf-mt-policy-learning}.} %
Our model is built upon the semi-symbolic layer and neural DNF model proposed in
pix2rule \cite{pix2rule} but with modifications that support probabilistic
representation for policy learning. The model is completely differentiable and
supports integration with deep actor-critic algorithms. It can also be used to
distil policies from other neural models. From trained neural DNF-MT actors, we
can extract bivalent logic programs for deterministic policies or probabilistic
logic programs for stochastic policies. These interpretable logical
representations are close approximations of the learned models. The
neural-bivalent-logic translation is bidirectional, thus enabling manual policy
intervention on the model. We can modify the bivalent logical program and port
it back to the neural model, benefiting from the tensor operations and
environment parallelism for fast inference. Compared to existing works, we do
not rely on rule templates or mode declarations. Furthermore, our model is
trained with a simple MLP critic and supports trainable preceding layers to
generalise relevant facts from complex observations, such as multi-dimensional
matrices.

To summarise, our main contributions are:

\begin{enumerate}
      \item We propose neural DNF-MT, a neuro-symbolic model for end-to-end
            policy learning and distillation, without requiring manually
            engineered inductive bias. It can be trained with deep actor-critic
            algorithms and supports end-to-end predicate invention.

      \item A trained neural DNF-MT actor's policy can be represented as a logic
            program (probabilistic for a stochastic policy and bivalent for a
            deterministic policy), thus providing interpretability.

      \item The neural-to-bivalent-logic translation is bidirectional, and we
            can modify the logical program for policy intervention and port it
            back to the neural model, benefiting from tensor operations and
            environment parallelism for fast inference.
\end{enumerate}

\section{Background}

\subsection{Reinforcement Learning}

RL tasks are commonly modelled as Markov Decision Processes (MDPs) \cite{mdp} or
sometimes Partially Observable Markov Decision Processes (POMDPs) \cite{pomdp-1,
    pomdp-2}, depending on whether the observed states are fully Markovian. The
objective of an RL agent is to learn a policy that maps states to action
probabilities $\pi(a_t|s_t)$ to maximise the cumulative reward. Value-based
methods such as Q-learning \cite{q-learning} and Deep Q-Networks (DQN)
\cite{dqn} approximate the action-value function $Q(s_t, a_t)$, while
policy-based methods such as REINFORCE \cite{reinforce} directly parameterise
the policy $\pi$. Actor-critic algorithms such as Advantage Actor-Critic (A2C)
\cite{a3c} and Proximal Policy Optimisation (PPO) \cite{ppo} combine both
value-based and policy-based methods, where the actor learns the policy
$\pi(a_t|s_t)$ and the critic learns the value function $V(s_t)$. Specifically,
PPO clips the policy update in a certain range to prevent problematic large
policy changes, providing stability and better performance.

\subsection{Semi-symbolic Layer and Neural DNF Model}

A neural Disjunctive Normal Form model \cite{pix2rule} is a fully differentiable
neural architecture where each node can be set to behave like a semi-symbolic
conjunction or disjunction of its inputs. For some trainable weights $w_i$, $i =
    1, \ldots, I$, and a parameter $\delta$, a node in the neural DNF model is given
by:
\begin{align}
    \hat{y} & = \tanh\left(\sum_{i=1}^{I} w_i x_i + \beta\right),
    \; \text{with} \;
    \beta   = \delta \left(\max_{i=1}^{I}|w_i| - \sum_{i=1}^{I} |w_i|\right) \label{eq:ss-bias}
\end{align}
Here the $I$ (semi-symbolic) inputs to the node are constrained such that $x_i
    \in [-1, 1]$, where the extreme value $1$ ($-1$) is interpreted as
associated term $i$ taking the logical value $\top$ ($\bot$) with other
values representing intermediate strengths of belief (a form of fuzzy logic
or generalised belief). The node activation $\hat{y} \in (-1, 1)$ is interpreted
similarly but cannot take specific values $1$ or $-1$. The node's
characteristics are controlled by a hyperparameter $\delta$, which induces
behaviour analogous to a logical conjunction (disjunction) when $\delta = 1$
($= -1$). The neural DNF model consists of a layer of conjunctive nodes
followed by a layer of disjunctive nodes. During training, the absolute
value of each $\delta$ in both layers is controlled by a scheduler that
increases from 0.1 to 1, as the model may fail to learn any rules if the
logical bias is at full strength at the beginning of training.

Pix2rule \cite{pix2rule} proposes interpreting trained neural DNF models as
logical rules with Answer Set Programming (ASP) \cite{asp} semantics by treating
each node's output $\hat{y} > 0$ ($\le 0$) as logical $\top$ ($\bot$) (akin to a
maximum likelihood estimate of the associated fact). However,
\citet{ns-classifications} point out that the neural DNF models cannot be used
to describe multi-class classification problems because the disjunctive layer
fails to guarantee a logically mutually exclusive output, i.e. with exactly one
node taking value $\top$. \citet{ns-classifications} instead propose an extended
model called neural DNF-EO, which adds a non-trainable conjunctive semi-symbolic
layer after the final layer of the base neural DNF to approximate the
`exactly-one' logical constraint `$class_j \leftarrow \land_{k, j \neq k} \text{
        not } class_k$', and again show how ASP rules can be extracted from trained
models.

\section{Neural DNF-MT Model}

This section explains why existing neural DNF-based models from \cite{pix2rule}
and \cite{ns-classifications} are imperfectly suited to represent policies
within a deep-RL agent, and presents a new model called neural DNF with
mutex-tanh activation (neural DNF-MT) to address these limitations. It then
shows how trained models can be variously interpreted as deterministic and
stochastic policies for the associated domains.

\subsection{Issues of Existing Neural DNF-based Models}\label{sec:existing-model-issue}

Unlike multi-class classification, where each sample has a single deterministic
class, an RL actor seeks to approximate the optimal policy with potentially
arbitrary action probabilities \cite{rl-book-sutton-barto}. It is possible for a
domain to have an optimal deterministic policy and for the RL algorithm to
approach it with an `almost deterministic' policy, where for each state the
optimal action's probability is significantly greater than the others (i.e. a
single almost-1 value vs all the rest close to 0). In this case, the actor
almost always chooses a single action, similar to a multi-class classification
model predicting a single class. A trained neural DNF-based model representing
such a policy should be interpreted as a bivalent logic program representing the
nearest deterministic policy. When we wish to preserve the probabilities encoded
within the trained neural DNF-based actor without approximating it with the
nearest deterministic policy, its interpretation should be captured as a
probabilistic logic program that expresses the action distributions. There is no
way to achieve both of these objectives with the neural DNF and neural DNF-EO
models, since their interpretation frameworks do not satisfy two forms of mutual
exclusivity: (a) \emph{probabilistic mutual exclusivity} when interpreted as a
stochastic policy, and (b) \emph{logical mutual exclusivity} when interpreted as
a deterministic policy. We first formalise the logic system represented by
neural DNF-based models (Definition~\ref{def:gbl}) and then define the two
mutual exclusivities possible in this logic system (Definition~\ref{def:tanh-me}
and \ref{def:prob-me}).

\begin{definition}[Generalised Belief Logic] \label{def:gbl}

    A neural DNF-based model that builds upon semi-symbolic layers represents a
    logic system. We refer to this logic system as \textbf{Generalised Belief
        Logic (GBL)}. A semi-symbolic node's activation $y_i \in (-1, 1)$ represents
    its belief in a logical proposition. For each activation $y_i$, we define a
    bivalent logic variable $b_i \in \{\bot, \top\}$ as its bivalent logical
    interpretation:
    \[
        b_i = \begin{cases}
            \top & \text{if } y_i > 0 \\
            \bot & \text{otherwise}
        \end{cases}
    \]

\end{definition}

\begin{definition}[Logical mutual exclusivity] \label{def:tanh-me}

    Given the final activation of a neural DNF-based model for $N$ classes
    $\mathbf{y} \in (-1, 1)^{N}$ and its bivalent logic interpretation
    $\mathbf{b} \in \{\bot, \top\}^{N}$, the model satisfies \textbf{logical
        mutual exclusivity} if there is exactly one $b_i$ that is $\top$:
    \[
        \models \left( \bigvee_{i \in \{1..N\}} b_i \right) \land
        \left( \bigwedge_{i, j \in \{1..N\},i < j} \neg (b_i \land b_j) \right)
    \]

\end{definition}

\begin{definition}[Probabilistic mutual exclusivity] \label{def:prob-me}

    A probabilistic interpretation of GBL is a function $f_p: (-1, 1) \to (0,
        1)$ that maps each belief $y_i$ to probability $p_i$ that $b_i$ holds as
    true. Formally,
    \[
        p_i = f_p(y_i) = \Pr(b_i = \top | y_i)
    \]
    A neural DNF-based model satisfies \textbf{probabilistic mutual exclusivity}
    if the interpreted probabilities associated with its activations $\mathbf{y}
        \in (0, 1)^{N}$ under probabilistic interpretation $f_p$ sum to 1. That is:
    \[
        \sum_i^{N} f_p(y_i) = 1
    \]

\end{definition}

To be used for interpretable policy learning, a neural DNF-based model must
guarantee the following properties:

\begin{enumerate}[P1:]
    \item\label{desire-property-problog} The model provides a probabilistic
          mutually exclusive interpretation (Definition~\ref{def:prob-me}) and
          can be interpreted as a probabilistic logic program (such as ProbLog
          \cite{problog}),

    \item\label{desire-property-asp} When the \textit{optimal policy is
              deterministic}, the model can also be interpreted as a bivalent
          logic program (such as ASP \cite{asp}) that satisfies logical
          mutual exclusivity (Definition~\ref{def:tanh-me}).
\end{enumerate}

A trained neural DNF model from \cite{pix2rule} does not provide probabilistic
interpretation or guarantee logical mutual exclusivity in its bivalent
interpretation, and thus fails \hyperref[desire-property-problog]{P1} and
\hyperref[desire-property-asp]{P2}. A trained neural DNF-EO from
\cite{ns-classifications} satisfies \hyperref[desire-property-asp]{P2} via its
constraint layer but fails to provide probabilistic interpretation for
\hyperref[desire-property-problog]{P1}. To address these requirements, we
propose a new model called neural DNF-MT and post-training processing steps that
translate a trained neural DNF-MT model into a ProbLog program and, where
applicable, into an ASP program. Our proposed model satisfies both properties
above.

\subsection{Mutex-tanh Activation}\label{sec:ndnf-mt-method}

Let $\mathbf{d} \in \mathbb{R}^{N}$ be the output vector of a disjunctive
semi-symbolic layer before any activation function and $d_k \in \mathbb{R}$ be
the output of the $k$\textsuperscript{th} disjunctive node. Using the softmax
function, we define the new activation function $\mutextanh$ as:
\begin{align}
    \softmax(\mathbf{d})_k   & = \frac{e^{d_{k}}}{\sum_{i}^{N} e^{d_{i}}} \notag          \\
    \mutextanh(\mathbf{d})_k & = 2 \cdot \softmax(\mathbf{d})_k - 1 \label{eq:mutex-tanh}
\end{align}

With the $\mutextanh$ activation function, our neural DNF-MT model is
constructed with a semi-symbolic conjunctive layer with a $\tanh$ activation
function and a disjunctive semi-symbolic layer with the $\mutextanh$ activation
function:
\begin{align*} \label{eq:ndnf-mt-full-architecture}
    \mathbf{c}         & = \tanh(\mathbf{W_c} \mathbf{x} + \mathbf{\beta_c}) &  & \text{Output of conj. layer}            \\
    \mathbf{d}         & = \mathbf{W_d} \mathbf{c} + \mathbf{\beta_d}        &  & \text{Raw output of disj. layer}        \\
    \mathbf{\tilde{y}} & = \mutextanh(\mathbf{d})                            &  & \text{mutex-tanh output of disj. layer}
\end{align*}
where $\mathbf{W_c}$ and $\mathbf{W_d}$ are trainable weights, and
$\mathbf{\beta_c}$ and $\mathbf{\beta_d}$ are the logical biases calculated as
Eq~(\ref{eq:ss-bias}). Note that $\mathbf{\tilde{y}} \in (-1, 1)^{N}$ shares the
same codomain as the disjunctive layer's $\tanh$ output $\hat{\mathbf{y}}$. The
disjunctive layer's bivalent interpretation $\hat{\mathbf{b}}$ still uses
$\hat{\mathbf{y}}$, with $\hat{b_i} = \top$ when $\hat{y_i} > 0$ and $\bot$
otherwise.

To satisfy \hyperref[desire-property-problog]{P1}, we compute the probability
$\mathbf{\tilde{p}}$ as:
\begin{equation} \label{eq:ndnf-mt-prob}
    \mathbf{\tilde{p}} = \left(f_p(\tilde{y}_1), \ldots, f_p(\tilde{y}_N)\right)^T
    \quad \text{where} \quad f_p(\tilde{y}_i) = \frac{\tilde{y}_i+1}{2}
\end{equation}
By construction, $\mathbf{\tilde{p}} \in (0, 1)^N$, and we have $\sum_k^{N}
    \tilde{p_k} = 1$ from Eq~(\ref{eq:mutex-tanh}) to satisfy probabilistic
mutual exclusivity.

\subsection{Policy Learning with Neural DNF-MT}\label{sec:policy-learning-ndnf-mt}

In the following, we show how the neural DNF-MT model can be trained in an
end-to-end fashion to approximate a stochastic policy and how to extract the
policy into interpretable logical form.


\noindent \textbf{Training Neural DNF-MT as Actor with PPO.}\ \ Using the PPO
algorithm \cite{ppo}, we train a neural DNF-MT actor with an MLP critic. The
input to the neural DNF-MT actor must be in $[-1, 1]^I$. Any discrete
observation is converted into a bivalent vector representation, as shown in
Figure \ref{fig:binary-ndnf-mt-ac}. If the observation is complex, as shown in
our experiment in Section~\ref{sec:door-corridor}, an encoder can be added
before the neural DNF-MT actor to invent predicates in GBL form. The encoder
output acts as input to the neural DNF-MT actor and the MLP critic, as shown in
Figure \ref{fig:image-ndnf-mt-ac}.

\begin{figure*}[t]
    \begin{minipage}{0.44\textwidth}
        \centering
        \includegraphics[width=\linewidth]{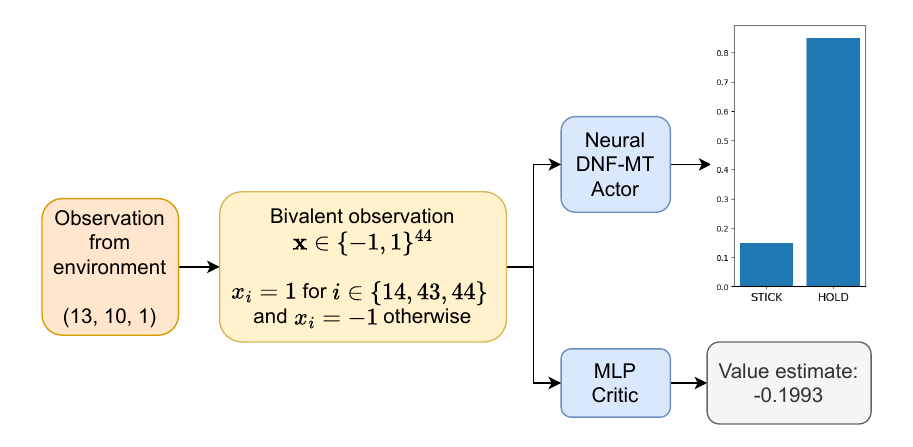}
        \caption{Neural DNF-MT model as an actor in actor-critic PPO, in
            environments with discrete observations.}
        \label{fig:binary-ndnf-mt-ac}
        \Description{Neural DNF-MT model in PPO algorithm for environments with
            categorical/factual observation.}
    \end{minipage} \hspace{0.5cm} %
    \begin{minipage}{0.48\textwidth}
        \centering
        \includegraphics[width=\linewidth]{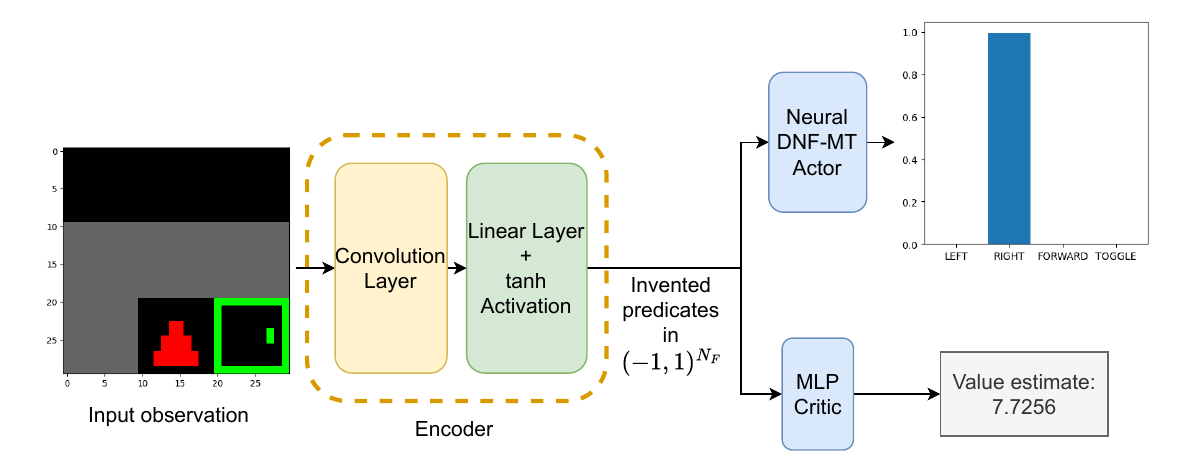}
        \caption{Neural DNF-MT model as an actor in actor-critic PPO, in
            environments with complex observations, such as an image-like
            multi-dimensional matrix.}
        \label{fig:image-ndnf-mt-ac}
        \Description{Neural DNF-MT model in PPO algorithm for environments with
            complex observation.}
    \end{minipage}
\end{figure*}

\begin{figure*}[!ht]
    \centering
    \includegraphics[width=0.92\linewidth]{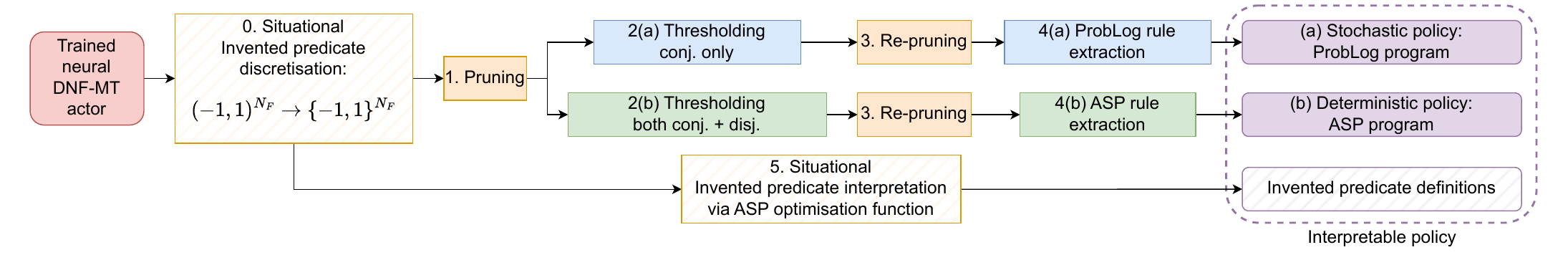}
    \caption{Post-training processing to extract an interpretable logical policy
        from a trained neural DNF-MT actor. There are two branches: one with
        sub-label (a) for extracting a stochastic policy in ProbLog and the
        other with sub-label (b) for extracting a deterministic policy in ASP.}
    \label{fig:post-training-process}
    \Description{Post-training pipeline for a trained neural DNF-MT actor.}
\end{figure*}

We here present the overall training loss of the actor-critic PPO with a neural
DNF-MT actor, which consists of multiple loss terms. The base training loss
component matches that from PPO \cite{ppo}:
\begin{equation}
    L^{\text{PPO}}(\theta) = \mathbb{E}_t \left[ L^{\text{CLIP}}(\theta) +
        c_1 L^{\text{value}}(\theta) - c_2 S[\pi_\theta](s_t) \right]
\end{equation}
where $c_1, c_2 \in \mathbb{R}$ are hyperparameters, $L^{\text{CLIP}}(\theta)$
is the clipped surrogate objective, $S[\pi_\theta](s_t)$ is the entropy of the
actor in training, and $L^{\text{value}}(\theta)$ is the value loss. The
detailed explanations for each term are in Appendix~\ref{appendix:ppo-loss}. The
action probability output of the neural DNF-MT actor defined in
Eq~(\ref{eq:ndnf-mt-prob}) is used to calculate the probability ratio in
$L^{\text{CLIP}}$ and the entropy term $S[\pi_\theta](s_t)$.

We add the following auxiliary losses to facilitate the interpretation of the
neural DNF-MT model into rules:
\begin{align}
    L^{(1)}(\theta)
     & = \frac{1}{N_{F}} \sum_{i}^{N_{F}} \left\vert 1 - \left\vert f_i
    \right\vert \right\vert
    \label{eq:l-feature-enc-reg}
    \\
    L^{(2)}(\theta)
     & = \frac{1}{|\theta_{\text{disj}}|}\sum \left| \theta_{\text{disj}}
    \cdot (6 - |\theta_{\text{disj}}|) \right|
    \label{eq:l-disj-weight-reg}
    \\
    L^{(3)}(\theta)
     & = \frac{1}{N_{C}}\sum_{i}^{N_{C}} \left\vert 1 - \left\vert
    c_i \right\vert \right\vert
    \label{eq:l-conj-tanh-out-reg}
    \\
    L^{(4)}(\theta)
     & = - \sum_{i}^{N} \left[p_i \log \left(\frac{\hat{y}_i + 1}{2}\right) +
        (1 - p_i) \log \left(1 - \frac{\hat{y}_i + 1}{2}\right) \right]
    \label{eq:l-mutex-tanh}
\end{align}
where $f_i$ is the invented predicate, $N_{F}$ is the number of output of an
encoder, and $N_{C}$ is the number of conjunctive nodes.
Eq~(\ref{eq:l-feature-enc-reg}) is used when there is an encoder before the
neural DNF-MT actor for predicate invention. It enforces the predicates'
activations to be close to $\pm 1$ so that they are stronger beliefs of
true/false. Eq~(\ref{eq:l-disj-weight-reg}) is a weight regulariser to encourage
the disjunctive weights to be close to $\pm 6$ (the choice of $\pm 6$ is to
saturate $\tanh$, as $\tanh(\pm 6) \approx \pm 1$).
Eq~(\ref{eq:l-conj-tanh-out-reg}) encourages the $\tanh$ output of the
conjunctive layer to be close to $\pm 1$. Eq~(\ref{eq:l-mutex-tanh}) is the key
term to satisfy \hyperref[desire-property-asp]{P2}, pushing for bivalent logical
interpretations for deterministic policies. This term mimics a cross-entropy
loss between each $\mutextanh$ output and corresponding individual $\tanh$
outputs of the disjunctive layer, pushing the probability interpretations of the
$\tanh$ outputs (i.e. $(\hat{y}_i + 1)/2$) towards their action probability
$\tilde{p}_i$ counterparts. If the optimal policy is deterministic, all
$\tilde{p_i}$ will be approximately 0 except for one, which is close to 1. Each
$\hat{y_i}$ is pushed towards $\pm 1$, and only one will be close to 1, thus
having exactly one bivalent interpretation $b_i = \top$ and satisfying logical
mutual exclusivity.

Finally, the overall training loss is defined as:
\begin{equation}
    L(\theta) = L^{\text{PPO}}(\theta) + \sum_{i\in\{1, 2, 3, 4\}}
    \lambda_i L^{(i)}(\theta)
\end{equation}
where $\lambda_i \in \mathbb{R}, i \in \{1,2,3,4\}$ are hyperparameters.


\noindent \textbf{Post-training Processing.} \ \ This extracts either a ProbLog
program for a stochastic policy or an ASP program for a close-to-deterministic
policy from a trained neural DNF-MT actor, where the logic program is a close
approximation of the model. It consists of multiple stages, as shown in Figure
\ref{fig:post-training-process}, described as follows.


\textbf{(1) Pruning}: This step repeatedly passes over each edge that connects
an input to a conjunction or a conjunction to a disjunction, and removes any
edge that can be removed (i) without changing the learned trajectory (for
deterministic domains) or (ii) without shifting any action probability for any
state more than some threshold $\tau_{\text{prune}}$ from the original learned
policy (for stochastic domains). Any unconnected nodes are also removed. The
process terminates when a pass fails to remove any edges or nodes.

\label{sec:thresholding}\textbf{(2) Thresholding}: This process converts a
semi-symbolic layer's weights from $\mathbb{R}$ to values in $\{-6, 0, 6\}$.
Given some threshold $\tau \in \mathbb{R}_{\ge 0}$, a new weight is computed as
$w'_{\text{k}ij} = 6 \cdot \indicatorfof{|w_{\text{k}ij}| \ge
    \tau}(w_{\text{k}ij}) \cdot \text{sign}(w_{\text{k}ij}), \text{k} \in \{
    \text{c}, \text{d}\}$. This weight update enables the \textit{neural to bivalent
    logic translation} described \hyperref[sec:nbl-translation]{later}. The
selection of $\tau$ should maintain the model's trajectory/action probability,
subject to the same checks used in pruning. For a thresholded node with at least
one non-zero weight, we replace its $\tanh$ activation with step function $h(x)
    = 2 \cdot \indicatorfof{x > 0}(x) - 1$, changing its output's range to $\{-1,
    1\}$. The thresholding process is applied differently to the disjunctive layer
depending on the nature of the policy desired.

\begin{enumerate}[(2.a)]
    \item \textbf{For stochastic policies}: Only the conjunctive layer is
          thresholded, i.e. choosing a value of $\tau$, updating only its
          weights and changing the activation function. The disjunctive layer
          still outputs action probabilities.
    \item \textbf{For deterministic policies}: Thresholding is applied to both
          the conjunctive and disjunctive layers: a single value $\tau$ is
          chosen and applied in both layers' weight update, and both layers have
          their $\tanh$ activation replaced with the step function. This process
          is only possible if the model satisfies
          \hyperref[desire-property-asp]{P2}.
\end{enumerate}

\textbf{(3) Re-pruning}: The pruning process from Step 1 is repeated.

\textbf{(4) Logical rules extraction}: All nodes (conjunctive and disjunctive)
are converted into some form of logical rules. The thresholding process
guarantees that all conjunctive nodes can be translated into bivalent logic
representations. For a conjunctive node $c_j$, we consider the set
$\mathcal{X}_j = \{i \in \{1..I\} | w'_{\text{c}ij} \neq 0\}$, and
$|\mathcal{X}_j| \neq 0$. We partition $\mathcal{X}_j$ into subsets
$\mathcal{X}_j^+ = \{i \in \mathcal{X}_j | w'_{\text{c}ij} = 6 \}$ and
$\mathcal{X}_j^- = \{i \in \mathcal{X}_j | w'_{\text{c}ij} = -6 \}$, and
translate $c_j$ to an ASP rule of the form $conj_j \leftarrow \bigwedge_{i \in
        \mathcal{X}_j^+} atom_i, \bigwedge_{i \in \mathcal{X}_j^-} (\text{not}\
    atom_i)$, where $atom_i$ is an atom for input $x_i$. The disjunctive nodes are
interpreted differently depending on the desired policy type.

\begin{enumerate}[(4.a)]
    \item\label{sec:problog-extraction} \textbf{Stochastic policy - ProbLog rules}: %
          We use ProbLog's annotated disjunctions to represent mutually
          exclusive action probabilities. Each unique achievable activation of
          the conjunctive layer $\mathbf{c}^{(m)}\in \{-1, 1\}^{C'}$ with $1 \le
              m \le 2^{C'}$ \footnote{$C'$ is the number of remaining conjunctive
              nodes after pruning, which may differ from the initial choice of $C$.}
          forms the body of a unique annotated disjunction of the form $p_1 ::
              action_1 ; ... ;$ $p_N :: action_N \leftarrow \bigwedge_{i \in
                  \mathcal{C}^{(m)+}} conj_i, \bigwedge_{i \in \mathcal{C}^{(m)-}}
              (\backslash + conj_i)$, where $\mathcal{C}^{(m)+}= \{ i | c^{(m)}_i =
              1\}$, $\mathcal{C}^{(m)-} = \{ i | c^{(m)}_i = -1\}$, and $p_j =
              (\tilde{y}^{(m)}_{j} + 1)/2$ (the probability assigned to the
          $j$\textsuperscript{th} action in the disjunctive activation for the
          $m^{\text{th}}$ unique activation). We compute such annotated
          disjunctions for all unique conjunctive activations. Listing
          \ref{code:sc-ws-ppo-ndnf-mtl4-1e5-aux-3673-problog} shows an example
          of ProbLog rules.

    \item \textbf{Deterministic policy - ASP rules}: Since the disjunctive layer
          is also thresholded, we translate each disjunctive node into a normal
          clause. For a disjunctive node $d_j$, we consider the set
          $\mathcal{C}_j = \{i \in \{1..C'\} | w'_{\text{d}ij} \neq 0\}$, and
          $|\mathcal{C}_j| \neq 0$. We partition $\mathcal{C}_j$ into subsets
          $\mathcal{C}_j^+ = \{i \in \mathcal{C}_j | w'_{\text{d}ij} = 6 \}$ and
          $\mathcal{C}_j^- = \{i \in \mathcal{C}_j | w'_{\text{d}ij} = -6 \}$,
          and translate $d_j$ to a formula of the form $disj_j \leftarrow
              (\bigvee_{i \in \mathcal{C}_j^+} conj_i) \vee (\bigvee_{i \in
                      \mathcal{C}_j^-} (\text{not}\ conj_i))$. In practice, the formula is
          represented as multiple rules with the same head in ASP. Listing
          \ref{code:sc-sn-ppo-ndnf-mt-l4-1e5-aux-5745-asp} shows an example of
          ASP rules.
\end{enumerate}

If there is an encoder before the neural DNF-MT actor in the overall
architecture, we perform a mandatory step of invented predicate discretisation
(step 0 in Figure~\ref{fig:post-training-process}) at the beginning of the
post-training process. We take the sign of the invented predicate $\tanh$
activations, converting them to $\pm 1$ to interpret them as bivalent logical
truth values of $\top$ or $\bot$. Each invented predicate is defined as a
minimal set of raw observations using an ASP optimisation function (step 5 in
Figure~\ref{fig:post-training-process}).

\noindent\label{sec:nbl-translation}\textbf{Neural-bivalent-logic translation.}\ \ %
The translation for deterministic policies is bidirectional and maintains truth
value equivalence: given an input tensor and its translated logical assignment,
the interpreted bivalent truth value of the neural DNF-MT model with only $\pm
6$-and-0-valued weights is the same as the logical valuation of its translated
ASP program, and vice versa.\footnote{This translation does not support
predicate invention.} A formal proof of this bidirectional claim is provided
in Appendix \ref{appendix:translation-proof}.

\section{Experiments}

\definecolor{codegreen}{rgb}{0,0.6,0}
\definecolor{codegray}{rgb}{0.5,0.5,0.5}
\definecolor{codepurple}{rgb}{0.58,0,0.82}
\definecolor{backcolour}{rgb}{0.95,0.95,0.92}

\lstdefinestyle{ip}{
    numberstyle=\tiny\color{codegray},
    basicstyle=\ttfamily\footnotesize,
    breakatwhitespace=false,
    breaklines=true,
    captionpos=b,
    keepspaces=true,
    numbers=left,
    numbersep=5pt,
    showspaces=false,
    showstringspaces=false,
    showtabs=false,
    tabsize=2,
    frame=single,
}
\lstset{style=ip}

\begin{figure*}[ht]
    \centering
    \includegraphics[width=\textwidth]{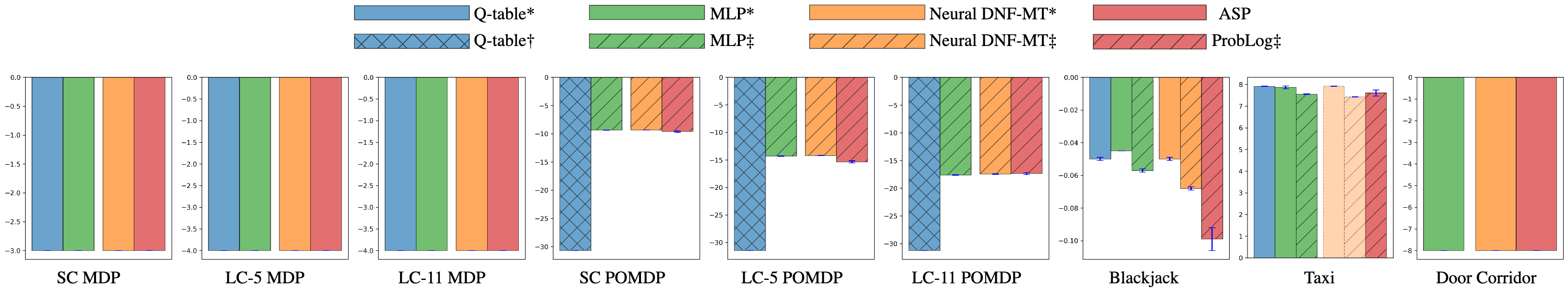}
    \caption{Mean episodic return (y-axis) $\pm$ standard error of the baselines
        and neural DNF-MT models, together with the ProbLog/ASP programs
        extracted from their corresponding neural DNF-MT models. All Q-tables
        are trained using Q-learning, and all MLP actors are trained with
        actor-critic PPO. Most neural DNF-MT actors are trained with
        actor-critic PPO. Except in the Taxi environment, the neural DNF-MT
        actor is distilled from a trained MLP actor (shown in dashed border and
        faded colour). Different symbols after the actor's name indicate
        different action selection methods: * for argmax action selection,
        $\dagger$ for $\epsilon$-greedy sampling, and $\ddagger$ for actor's
        distribution sampling. The same result is also reported in Table
        \ref{tab:rl-performance} in Appendix.}
    \label{fig:rl-results}
    \Description{Performance of the models in all environments, together with
        the ProbLog/ASP programs extracted from their corresponding neural
        DNF-MT models.}
\end{figure*}

We evaluate the RL performance (measured in episodic return) of our neural
DNF-MT actors and their interpreted logical policies in four sets of
environments with various forms of observations. Some tasks require stochastic
behaviours, while others can be solved with deterministic policies. We compare
our method with two baselines: Q-tables trained with Q-learning where applicable
and MLP actors trained with actor-critic PPO. Our neural DNF-MT actors are
trained with MLP critics using the PPO algorithm in the Switcheroo Corridor set,
Blackjack and Door Corridor environments. In the Taxi environment, we distil a
neural DNF-MT actor from a trained MLP actor. We do not directly evaluate the
extracted ProbLog policies because of the long ProbLog query time. Instead, we
evaluate their final neural DNF-MT actors before logical rule extraction (i.e.
after step 3, re-pruning) as an approximation. The approximation is acceptable
because a ProbLog policy's action distribution is the same as its corresponding
neural DNF-MT's action distribution to 3 decimal places. A performance
evaluation summary is shown in Figure \ref{fig:rl-results}, and a detailed
version is presented in Table \ref{tab:rl-performance} in Appendix.

\subsection{Switcheroo Corridor}\label{sec:ss-corridor}

We adopt an example environment from \cite{rl-book-sutton-barto} and create a
set of Switcheroo Corridor environments that support MDP tasks with
deterministic policies and POMDP tasks with stochastic policies. The observation
can be either (i) the state number one-hot encoding of the agent's current
position (an MDP task) or the wall status of the agent's current position (a
POMDP task). In most states, going left or right results in moving in the
intended direction. However, there are special states that reverse the action
effect. Thus, the nature of the task decides whether the optimal policy is
deterministic or stochastic. In the MDP setting, the optimal policy is
deterministic: identifying the special states based on the state number and
going left in them. In the POMDP setting, identifying the special states based
solely on wall status observations is impossible without a memory. The optimal
policy shows stochastic behaviour so that the correct action may be sampled in
the special states.

\begin{wrapfigure}{r}{0.5\columnwidth}
    \centering
    \includegraphics[width=0.47\columnwidth]{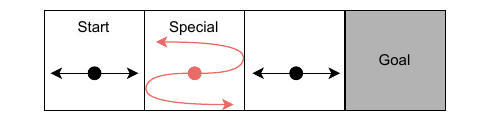}
    \caption{Small corridor (SC), same as the one from
        \cite{rl-book-sutton-barto}.}
    \label{fig:small-corridor}
    \Description{Small Corridor environment - an image visualising the
        environment. The agent starts at state 0 and needs to reach state 3
        (0-indexed), and state 2 is the special state that reverses the action
        effect.}
\end{wrapfigure}


The start, goal, and special states are customisable but fixed once created
throughout training and inference. We create three corridors based on different
configurations: Small Corridor (SC) as shown in Figure \ref{fig:small-corridor},
Long Corridor-5 (LC-5), and Long Corridor-11 (LC-11), to test the actor's
learning ability when the environment complexity increases. We show the figures
of LC-5 and LC-11 in Appendix \ref{appendix:ss-corridor}, and their
configurations are listed below.

\begin{table}[h]
    \caption{Environment configurations for LC-5 and LC-11.}
    \label{tab:ss-corridor-env-set-config}
    \begin{tabular}{ccccc} \toprule
        Name  & \begin{tabular}[c]{@{}c@{}}Corridor\\ Length\end{tabular} & \begin{tabular}[c]{@{}c@{}}Start\\ State\end{tabular} & \begin{tabular}[c]{@{}c@{}}Goal\\ State\end{tabular} & \begin{tabular}[c]{@{}c@{}}Special\\ State(s)\end{tabular} \\ \midrule
        LC-5  & 5                                                         & 0                                                     & 4                                                    & {[}1{]}                                                    \\
        LC-11 & 11                                                        & 7                                                     & 3                                                    & {[}5, 6, 7, 8{]}                                           \\ \bottomrule
    \end{tabular}
\end{table}

The first six groups in Figure \ref{fig:rl-results} show the performance of all
models in the environment set. In MDP settings, all methods using argmax action
selection perform equally well, reaching the goal with the minimum number of
steps. In POMDP settings, MLP and neural DNF-MT actors perform better than
Q-table with $\epsilon$-greedy sampling as expected, with minor performance
differences. Neural DNF-MT actors provide interpretability via logical programs
compared to MLP actors. Listing \ref{code:sc-sn-ppo-ndnf-mt-l4-1e5-aux-5745-asp}
shows the ASP program for a neural DNF-MT actor in SC MDP, where state 1 is
identified as special. Listing
\ref{code:sc-ws-ppo-ndnf-mtl4-1e5-aux-3673-problog} shows the ProbLog rules for
a neural DNF-MT actor in SC POMDP. As shown in line 1 in Listing
\ref{code:sc-ws-ppo-ndnf-mtl4-1e5-aux-3673-problog}, the actor favours the
action going right when only the left wall is present, which only happens in
state 0. Line 2 shows the case when the agent is in either state 1 or 2 with no
wall on either side, and the actor shows close to 50-50 preference for both
actions. The logical representations for policies learned in LC-5 and LC-11 are
included in Appendix \ref{appendix:add-exp-switcheroo}.

\begin{lstlisting}[
    label={code:sc-sn-ppo-ndnf-mt-l4-1e5-aux-5745-asp},
    caption={ASP rules of a neural DNF-MT actor in SC MDP.}
]
action(left) :- in_s_1.    action(right) :- not in_s_1.
\end{lstlisting}

\begin{lstlisting}[
    label={code:sc-ws-ppo-ndnf-mtl4-1e5-aux-3673-problog},
    caption={ProbLog rules of a neural DNF-MT actor in SC POMDP.}
]
0.041::action(left) ; 0.959::action(right) :- left_wall_present, \+ right_wall_present.
0.581::action(left) ; 0.419::action(right) :- \+ left_wall_present, \+ right_wall_present.
\end{lstlisting}

\subsection{Blackjack} \label{sec:blackjack-env}

The Blackjack environment from \cite{rl-book-sutton-barto} is a simplified
version of the card game Blackjack, where the goal is to beat the dealer by
having a hand closer to 21 without going over. The agent sees the sum of its
hand, the dealer's face-up card, and whether it has a usable ace. It can choose
to hit or stick. The performance across the models is shown in the
7\textsuperscript{th} group in Figure \ref{fig:rl-results} and Table
\ref{tab:blackjack-rl-win-rate}. The baseline Q-table from
\cite{rl-book-sutton-barto} only shows a single action, so we only evaluate it
with argmax action selection. We evaluate the MLP and neural DNF-MT actors with
both argmax action selection and actor's distribution sampling. MLP actors with
argmax action selection perform better than their distribution sampling
counterparts, with a higher episodic return and win rate. The same is observed
for neural DNF-MT actors. The extracted ProbLog rules\footnotemark
\footnotetext{An extracted ProbLog program example is listed in Appendix
    \ref{appendix:add-exp-blackjack}.}
perform worse than their original neural DNF-MT actors (no post-training
processing), with a higher policy divergence from the Q-table from
\cite{rl-book-sutton-barto}. We observe a policy change from a trained neural
DNF-MT actor to its extracted ProbLog policy (shown in Figures
\ref{fig:blackjack-ndnf-mt-3191-soft-policy-cmp-q} and
\ref{fig:blackjack-ndnf-mt-3191-problog-policy-cmp-q} in Appendix) at the
thresholding stage during the post-training processing. This unwanted policy
change caused by thresholding leads to performance loss and persists in later
environments; we will discuss this issue further in Section
\ref{sec:discussions}.

{\small
\begin{table}[h]
    \caption{Blackjack: performance of MLP actors, neural DNF-MT actors, and the
        extracted ProbLog programs. Policy divergence measures the proportion of
        states where the argmax policy disagrees with the Q-table from
        \cite{rl-book-sutton-barto}.}
    \label{tab:blackjack-rl-win-rate}
    \begin{tabular}{llll} \toprule
        Model                                & Episodic return    & Win rate             & Policy Divergence                     \\ \midrule
        Q-table \cite{rl-book-sutton-barto}* & -0.050 $\pm$ 0.001 & 42.94\% $\pm$ 0.00\% & NA                                    \\
        MLP*                                 & -0.045 $\pm$ 0.000 & 43.24\% $\pm$ 0.02\% & \multirow{2}{*}{15.87\% $\pm$ 0.30\%} \\
        MLP$\ddagger$                        & -0.057 $\pm$ 0.001 & 42.84\% $\pm$ 0.02\% &                                       \\
        NDNF-MT*                             & -0.050 $\pm$ 0.001 & 42.82\% $\pm$ 0.06\% & \multirow{2}{*}{20.66\% $\pm$ 0.56\%} \\
        NDNF-MT$\ddagger$                    & -0.068 $\pm$ 0.001 & 42.17\% $\pm$ 0.03\% &                                       \\
        ProbLog$\ddagger$                    & -0.099 $\pm$ 0.007 & 40.79\% $\pm$ 0.31\% & 27.92\% $\pm$ 1.25\%                  \\ \bottomrule
    \end{tabular}
\end{table}
}

\subsection{Taxi} \label{sec:taxi-env}

In the Taxi environment (Figure \ref{fig:taxi-env} in Appendix) from
\cite{taxi-og-paper}, the agent controls a taxi to pick up a passenger first and
drop them off at the destination hotel. A state number is used as the
observation, and it encodes the taxi, passenger and hotel locations using the
formula $((taxi\_row * 5 + taxi\_col) * 5 + passenger\_loc) * 4 + destination$.
Apart from moving in four directions, the agent can pick up/drop off the
passenger, but illegally picking up/dropping off will be penalised. The
environment is designed for hierarchical reinforcement learning but is solvable
with PPO and without task decomposition. However, for both MLP actors and neural
DNF-MT actors, we find that the performance is more sensitive to PPO's
hyperparameters and fine-tuning the hyperparameters is more difficult than in
other environments. With the wrong set of hyperparameters, the actor settles at
a local optimal with a reward of -200: never perform `pickup'/`drop-off' and
move until the step limit (200 steps). The environment is complex due to its
hierarchical nature, and learning the task dependencies based on purely state
numbers proves to be difficult, as a 1-value change in the x/y coordinate of the
taxi results is a change of state number in 100s/10s. We successfully trained
MLP actors with actor-critic PPO but failed to find a working set of
hyperparameters to train neural DNF-MT actors. Instead, we distil a neural
DNF-MT actor from a trained MLP actor, taking the same observation as input and
aiming to output the exact action probabilities as the MLP oracle.

The performance is shown in the 8\textsuperscript{th} group in Figure
\ref{fig:rl-results}. Actors using argmax action selection perform better than
their distribution sampling counterparts. Again, we observe a performance drop
in extracted ProbLog rules.\footnote{An extracted ProbLog program example is
    listed in Appendix \ref{appendix:add-exp-taxi}. } With 300 unique possible
starting states, the extracted ProbLog rules are not guaranteed to finish in 200
steps: 2 out of the 10 ProbLog evaluations with action probabilities sampling
have truncated episodes. Across ten post-training-processed neural DNF-MT actors
with argmax action selection, there are an average of 3.3 unique starting states
where the models cannot finish the environment within 200 steps. De-coupling the
observation seems complicated and makes it hard to learn concise conjunctions,
thus increasing the error rate in the post-training processing.

\subsection{Door Corridor}\label{sec:door-corridor}

Inspired by Minigrid \cite{minigrid}, we design a corridor grid with a fixed
configuration called Door Corridor, as shown in Figure \ref{fig:door-corridor}.
The agent observes a $3\times3$ grid in front of it (as shown as the input in
Figure \ref{fig:image-ndnf-mt-ac}) and has a choice of four actions: turn left,
turn right, move forward, and toggle. The toggle action only changes the status
of a door right in front of the agent.


For this environment, we use the architecture shown in Figure
\ref{fig:image-ndnf-mt-ac}, where an encoder is shared between the actor and the
critic. The performance of MLP actors, neural DNF-MT actors and their extracted
ASP programs are shown in the last group in Figure \ref{fig:rl-results}. Both of
the neural actors learn the optimal deterministic policy.

\begin{wrapfigure}{r}{0.4\columnwidth}
    \centering
    \includegraphics[width=0.37\columnwidth]{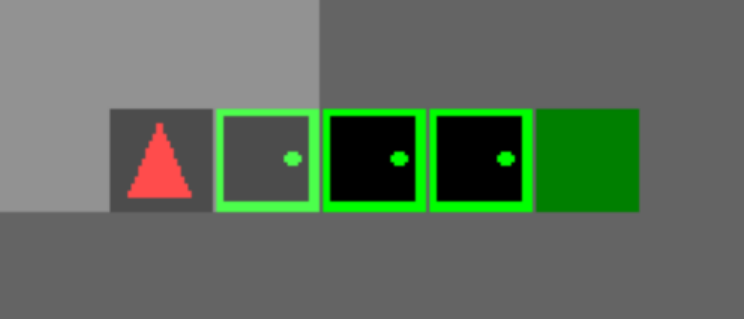}
    \caption{Door Corridor (DC): the agent needs to turn right first, and toggle
        and go through three doors to reach the end of the corridor.}
    \label{fig:door-corridor}
    \Description{Door Corridor environment - an image visualising the
        environment.}
\end{wrapfigure}

To evaluate an extracted ASP program in the environment, we first pass the
$3\times3$ observation to the encoder, convert invented predicates with bivalent
interpretations $\top$ to ASP facts, and then append these facts as context to
the base policy. The combined ASP program has to (i) output one stable model
with only one action at each step (logically mutual exclusive) and (ii) finish
without truncation to be counted as a successful run. These requirements are
also reflected in the neural DNF-MT actor: the final $\tanh$ activation should
only have one value greater than 0, and taking that only action with
greater-than-0 $\tanh$ activation at each step finishes the environment without
truncation. The auxiliary loss terms in Equations \ref{eq:l-feature-enc-reg},
\ref{eq:l-conj-tanh-out-reg} and \ref{eq:l-mutex-tanh} help the neural DNF-MT
actor to achieve these requirements but make the training less likely to
converge on a good solution. Out of 32 runs, 6 runs cannot finish the
environment within the step limit. However, 25 of the 26 remaining runs can be
interpreted as ASP policies. For the single failing case, it fails to maintain
logical mutual exclusivity after thresholding. While it is possible to extract a
ProbLog program from it, we know the environment supports an optimal
deterministic policy. Hence, no logical program is extracted for this run. The
ASP programs of the 25 runs successfully finish the environment with minimal
steps, as reported in Figure \ref{fig:rl-results}. Listing
\ref{code:dc-ndnf-mt-5187-asp} shows an example of the extracted ASP program
from one of the successful runs and a possible set of definitions for
the invented predicates.
\begin{lstlisting}[
    caption={An ASP policy for a neural DNF-MT actor in DC, with a possible set
        of definitions for the invented predicates.},
    label={code:dc-ndnf-mt-5187-asp}
]
action(turn_right) :- a_5, a_8.
action(forward) :- a_2.
action(toggle) :- a_3.
% Definitions of each invented predicate a_i:
a_2 :- top_right_corner_wall.
a_3 :- one_step_ahead_closed_door.
a_5 :- not curr_location_open_door,
        not one_step_ahead_closed_door.
a_8 :- two_step_ahead_unseen.
\end{lstlisting}

\noindent \textbf{Policy Intervention.} \label{sec:door-corridor-intervention} \ \ %
We create two variations of the base Door Corridor environment with different
termination conditions: Door Corridor-T (DC-T), where the agent must be in front
of the goal and toggle it instead of moving into it, and Door Corridor-OT
(DC-OT), where the agent must stand on the goal and take the action `toggle'.
The input observation remains unchanged since only the goal cell's mechanism
changes. The encoder can be reused immediately, but the actor and critic cannot.
An MLP actor trained on DC fails to finish within step limits in DC-T and DC-OT
environments without re-training. However, we can modify the ASP policy to
achieve the optimal reward in both DC-T and DC-OT environments. Listings
\ref{code:dct-ndnf-mt-5187-asp} and \ref{code:dcot-ndnf-mt-5187-asp} show the
modified ASP programs for DC-T and DC-OT environments, respectively. The
modified ASP programs can be ported back to neural DNF-MT actors by virtue of
the bidirectional neural-bivalent-logic translation. The modified neural DNF-MT
actors also finish DC-T and DC-OT environments with minimal steps without any
re-training.

\begin{lstlisting}[
    mathescape,
    caption={Policy for DC-T, modified from Listing
        \ref{code:dc-ndnf-mt-5187-asp}'s policy.},
    label={code:dct-ndnf-mt-5187-asp}
]
action(turn_right) :- a_5, a_8.
action(forward) :- $\textbf{not a\_1}$, a_2.
action(toggle) :- a_3.
$\textbf{action(toggle) :- a\_1, not a\_3, a\_12}.$
\end{lstlisting}

\begin{lstlisting}[
    mathescape,
    caption={Policy for DC-OT, modified from Listing
        \ref{code:dc-ndnf-mt-5187-asp}'s policy.},
    label={code:dcot-ndnf-mt-5187-asp}
]
action(turn_right) :- a_5, a_8, $\textbf{a\_11}$.
action(forward) :- a_2.
action(toggle) :- a_3.
$\textbf{action(toggle) :- not a\_2, not a\_3, not a\_11}.$
\end{lstlisting}

\section{Discussions} \label{sec:discussions}

We first analyse the persistent performance loss issue in Blackjack, Taxi, and
Door Corridor environments.

\noindent \textbf{Performance loss due to thresholding.}\ \ %
The thresholding step converts the target layer(s) from a weighted continuous
space to a discrete space with only 0 and $\pm 6$ values, saturating the $\tanh$
activation at $\pm 1$ and enabling the translation to bivalent logic. However,
the thresholding step may not maintain the same logical interpretation of the
layer output. Here we show this issue through an example in Listing
\ref{code:dc-2501-failed-case}, where a thresholded neural DNF-MT actor fails to
maintain logical mutual exclusivity in the Door Corridor environment. Note that
we apply thresholding on both the conjunctive and disjunctive layers since we
desire a deterministic policy.

\begin{lstlisting}[
    label={code:dc-2501-failed-case},
    caption={A neural DNF-MT actor in the Door Corridor environment that fails
        at the thresholding stage. We leave the bias terms uncalculated for
        brevity.}
]
% Conjunctive nodes:
c_0  =  3.03 x_7  + bias_c0
c_7  =  0.56 x_13 + bias_c7
c_9  = -1.56 x_2  + bias_c9
c_11 = -1.05 x_9  + bias_c11
% Disjunctive nodes:
d_1  =  4.58 c_0 + bias_d1
d_2  = -3.48 c_9 + bias_d2
d_3  =  1.29 c_7 + 0.76 c_9 + 4.33 c_11 + bias_d3
% Thresholded nodes with tau = 0 (and ASP translation):
c_0  =  6 x_7                  (conj_0  :- a_7.)
c_7  =  6 x_13                 (conj_7  :- a_13.)
c_9  = -6 x_2                  (conj_9  :- not a_2.)
c_11 = -6 x_9                  (conj_11 :- not a_9.)
d_1  =  6 c_0                  (act(1)  :- conj_0.)
d_2  = -6 c_9                  (act(2)  :- not conj_9.)
d_3  =  6 c_7 + 6 c_9 + 6 c_11 + 12
(act(3)  :- conj_7. act(3) :- conj_9. act(3) :- conj_11.)
\end{lstlisting}
The 1\textsuperscript{st} row of values in Table \ref{tab:dc-2501-failed-case}
are the pre-thresholding $\tanh$ output when $x_2 = -1, x_7 = 1, x_9 = 1, x_{13}
= -1$ , with only $\hat{y_1}$ interpreted as $\top$ and chosen as action. The
thresholded conjunctive nodes in the 2\textsuperscript{nd} row share the same
sign as row 1, but $\hat{y_3}$ becomes positive after thresholding, resulting in
two actions being $\top$ and thus violating logical mutual exclusivity. The
disjunctive nodes' original weights achieve the balance of importance between
$c_7$, $c_9$ and $c_{11}$ to make $\hat{y_3}$ negative. However, the
thresholding process ignores the weights and makes them equally important,
leading to a different output and truth value. It shows that the thresholding
stage cannot handle volatile and interdependent weights and maintain the
underlying truth table represented by the model. We leave it as a future work to
improve/replace the thresholding stage with a more robust method.

\begin{table}[H]
    \caption{Conjunctive and disjunctive nodes' $\tanh$ output when $x_2 = -1,
            x_7 = 1, x_9 = 1, x_{13} = -1$, calculated based on the formulation
        in Listing \ref{code:dc-2501-failed-case}. Row 1 is the original
        output without applying thresholding, and row 2 is the output after
        thresholding on value 0.}
    \label{tab:dc-2501-failed-case}
    \begin{tabular}{lllllll} \toprule
        $c_0$ & $c_7$ & $c_9$ & $c_{11}$ & $\hat{y_1}$ & $\hat{y_2}$ & $\hat{y_3}$ \\ \midrule
        1.00  & -0.51 & 0.92  & -0.78    & 1.00        & -1.00       & -0.86       \\
        1.00  & -1.00 & 1.00  & -1.00    & 1.00        & -1.00       & 1.00        \\ \bottomrule
    \end{tabular}
\end{table}

\balance

We now discuss the implications of our work in terms of performance,
interpretability, inference, and policy intervention.

\noindent \textbf{Performance.}\ \  From the experiments, we see that the neural
DNF-MT actor can be trained with actor-critic PPO or distilled from an MLP
oracle to learn an optimal policy in the Switcheroo Corridor, Taxi, and Door
Corridor environments. In Blackjack, the performance is worse but close to an
optimal MLP actor. Furthermore, we demonstrate that an encoder for handling
complex observations and realising predicate invention can be end-to-end trained
with the neural DNF-MT actor in the Door Corridor environment.

\noindent \textbf{Interpretability.}\ \  The logical programs extracted from
trained neural DNF-MT actors provide interpretability, which MLP actors lack. We
also demonstrate through different environments that we can represent stochastic
and deterministic policies in different forms of logic (ProbLog and ASP,
respectively).

\noindent \textbf{Inference.}\ \  Even if the actor has learnt an interpretable
policy, running a fully logic-based agent might not be efficient. Inference in
neural DNF-MT actors is significantly faster than in ProbLog or ASP, thanks to
tensor operations and environment parallelism (see Appendix
\ref{appendix:add-disc-run-time} for detailed comparison).

\noindent \textbf{Policy Intervention.}\ \  The bidirectional
neural-bivalent-logic translation allows us to modify the ASP program and
translate it back to the neural architecture without re-training, as shown in
Door Corridor's variations in Section \ref{sec:door-corridor-intervention}. This
feature could be helpful in tasks where we have background knowledge. By
pre-encoding the information into logical rules or modifying the logical rules
of an actor trained in a similar environment, the edited logic program can be
ported back to the neural model to provide a hot start in training. This
functionality will be explored in future work.

In summary, our neural DNF-MT model learns interpretable and editable policies,
with the neural benefits of end-to-end training and parallelism in inference and
the logical benefits of interpretable logical program representation.

\section{Related Work}

Many neuro-symbolic approaches perform the task of inductive logic programming
(ILP) \cite{ilp, ilp-30} in differentiable models, and policies are learned and
represented as logical rules. They are commonly applied in Relational RL
\cite{rrl, rrl-2} domains that utilise symbolic representations for states,
actions, and policies. NLRL \cite{nlrl} and NUDGE \cite{nudge} are two
approaches based on the differentiable ILP system \cite{delta-ilp} and its
extension from \cite{dilp-structured}, where the search space needs to be
defined first. NLRL generates candidate rules using rule templates. NUDGE
distils symbolic policy from a trained neural model by defining its search space
with mode declarations \cite{mode-declaration} and then training rule-associated
weights. NeSyRL \cite{nsrl-fol} and Differentiable Logic Machine (DLM)
\cite{dlm} do not associate weights with rules but predicates; thus, they are
not reliant on rule templates or mode declarations. NeSyRL uses a disjunctive
normal form Logical Neural Network (LNN) \cite{lnn} as its actor, and each
neuron represents an atom/logical connective. A pre-trained semantic parser
extracts first-order logic predicates from text-based observations, and the LNN
selects actions to generate trajectories that get stored in a replay buffer for
training, similar to DQN \cite{dqn}. DLM builds upon Neural Logic Machine
\cite{nlm} to realise forward chaining, but with logical computation units to
provide interpretability. A DLM actor is trained with actor-critic PPO
\cite{ppo}, with a specially designed critic with GRUs to handle different-arity
predicates.

Different from NLRL \cite{nlrl} and NUDGE \cite{nudge}, our neural DNF-MT model
does not use rule templates or mode declarations. Therefore, it does not rely on
human engineering to construct the inductive bias and can learn a wider range of
rules. Compared to the mentioned works that either operate on relational-based
observations \cite{nlrl, nudge, dlm} or require pre-trained networks to extract
logical predicates \cite{nudge, nsrl-fol}, we demonstrate that our neural DNF-MT
model is end-to-end trainable with preceding layers for predicate invention.
Akin to DLM \cite{dlm}, we use the PPO algorithm for training; however, our
method does not require a specialised critic.

\section{Conclusion}

We propose a neuro-symbolic approach named the neural DNF-MT model for learning
interpretable and editable policy in RL. It can be trained with actor-critic PPO
or distilled from a trained MLP actor, and an encoder for predicate invention
can also be end-to-end trained together with it. The trained neural DNF-MT model
can be represented as either a ProbLog program for stochastic policy or an ASP
program for deterministic policy. The neural-bivalent-logic translation is
bidirectional, allowing policy intervention by modifying the ASP program and
then converting it back to the neural model for efficient inference in parallel
environments. We evaluate the neural DNF-MT model in four environments with
different forms of observations and stochastic/deterministic behaviours. The
experiments show the neural DNF-MT model's capability to learn the optimal
policy with performance similar to an MLP actor's. Furthermore, it provides
logical representation and use cases for policy intervention, neither of which
can be provided easily by an MLP. In future work, we aim to follow up on the
policy intervention idea by providing the neural DNF-MT actor with a hot
starting point from a modified policy. Moreover, the thresholding stage during
post-training processing needs to be improved/replaced so that the underlying
logical relations learned by the neural DNF-MT model can be extracted without
performance loss.






\bibliographystyle{ACM-Reference-Format}
\bibliography{sample}

\newpage

\begin{appendix}
  \section{Neural-bivalent-logic Translation} \label{appendix:translation-proof}

This section focuses on proving that the neural-bivalent-logic translation
bidirectionally maintains the logical truth value equivalence. We first mention
some important properties of semi-symbolic nodes, then prove that the
translation from neural to bivalent logic has the same inference truth value and
vice versa. The proof is done for a conjunctive node, and a disjunctive node's
proof is similar.

\subsection{Semi-symbolic Node Properties}

Consider a node in a semi-symbolic layer with $I$ inputs, and its weights
$\mathbf{w} \in \{-6, 0, 6\}^I$ and at least one is non-zero (i.e. $\exists i
    \in \{1..I\}. w_i \neq 0$), it has either of the following two properties:

\begin{remark} \label{remark:conj-node-output}

    If the node is \textbf{conjunctive}, with $I$ inputs and at least one
    non-zero weight: given an input tensor $\mathbf{x}$ to it, and $\forall i
        \in \{1..I\}. w_i \in \{-6, 0, 6\}, x_i \in \{-1, 1\}$, the node's raw
    output will never be 0. It can be 6 or less than or equal to -6. Since
    $\tanh(\pm 6) = \pm 0.99998771165$, the conjunctive node's $\tanh$
    activation is treated to be exactly $\pm 1$ for the forward evaluation; or
    we can replace the $\tanh$ activation with a step function that maps the
    range to $\{-1, 1\}$,  as mentioned in the thresholding process in Section
    \ref{sec:thresholding}.
\end{remark}

\begin{remark}

    If the node is \textbf{disjunctive}, with $I$ inputs and at least one
    non-zero weight: given an input tensor $\mathbf{x}$ to it, and $\forall i
        \in \{1..I\}. w_i \in \{-6, 0, 6\}, x_i \in \{-1, 1\}$, the node's raw
    output will never be 0. It can be -6 or greater or equal to 6. Like the
    conjunctive node, the disjunctive node's activation is $\pm 1$.

\end{remark}

We focus on solving the combined program of the rule translated from a
semi-symbolic node with weights $\mathbf{w} \in \{-6, 0, 6\}^I$, and the facts
translated from an input $\mathbf{x} \in \{-1, 1\}^I$. If an input/conjunction
is connected to a conjunctive/disjunctive node with a weight of value $-6$, it
can be translated to a literal with either classical negation (`-') or negation
as failure (NAF, `not') and then added as part of the rule body. Similarly, any
$x_i = -1$ can be either mapped to a fact with classical negation `-a\_i', or
not added to the program and `not a\_i' would be true. The inference result of
the rules is the same regardless of whether the classical negation or NAF
translation is chosen. We choose the NAF style of translation across the paper
and proof.

\subsection{Neural to Bivalent Logic Translation with Truth Value Equivalence}

\renewcommand\qedsymbol{$\blacksquare$}

We have a conjunctive node with $I$ inputs in a semi-symbolic layer, and its
weights are $\mathbf{w}$ and $\delta = 1$. Let an input to the node be
$\mathbf{x}$ and $g(\mathbf{x})$ be the raw output of the conjunctive node
without any activation. The following conditions also hold for this node:
\begin{align}
    \mathbf{w} \in \{-6, 0, 6\}^I \label{condition:neural-to-asp-weight}, & \exists i \in \{1..I\}. w_i \neq 0                    \\
    \mathbf{x}                                                            & \in \{-1, 1\}^I \label{condition:neural-to-asp-input}
\end{align}

Recall the bivalent logic translation we defined in Definition \ref{def:gbl}.
Let $b$ be the bivalent logical interpretation of this conjunctive node. And $b
    = \top$ if $\tanh(g(\mathbf{x})) > 0$, and $b = \bot$ otherwise.

We specifically consider the set $\mathcal{X} = \{i | i \in \{1..I\}, w_i \neq
    0\}$, and $|\mathcal{X}| \neq 0$. By construction, $\forall i \in \mathcal{X}.
    |w_i| = 6$. We further divide set $\mathcal{X}$ into two mutually exclusive
subsets:
\begin{align*}
    \mathcal{X}^+ & = \left \{i | i \in \mathcal{X}, w_i = 6\right \}  \\
    \mathcal{X}^- & = \left \{i | i \in \mathcal{X}, w_i = -6\right \}
\end{align*}

By construction, $\mathcal{X}^+ \cap \mathcal{X}^- = \emptyset$ and
$|\mathcal{X}^+| + |\mathcal{X}^-| = |\mathcal{X}|$.

We translate the conjunctive node to an ASP rule, and the translated rule is in
the form:
\begin{equation} \label{eq:translation-neural-to-asp-rule}
    h \leftarrow \bigwedge_{i \in \mathcal{X}^+} a_i, \bigwedge_{i \in \mathcal{X}^-} (\text{not}\ a_i)
\end{equation}

We translate an input $\mathbf{x} \in \{-1, 1\}^I$ to facts, and the set of
facts are:
\begin{equation} \label{eq:translation-neural-to-asp-facts}
    \{a_i. | i \in \{1..I\}, x_i = 1\}
\end{equation}

\begin{proposition}
    Given a conjunctive semi-symbolic node that satisfies Conditions
    (\ref{condition:neural-to-asp-weight}) and its translated ASP rule with rule
    head $h$ based on Translation (\ref{eq:translation-neural-to-asp-rule}), and
    an input tensor $\mathbf{x}$ that satisfies Condition
    (\ref{condition:neural-to-asp-input}) and its translated facts based on
    Translation (\ref{eq:translation-neural-to-asp-facts}), the bivalent logical
    interpretation $b$ of the conjunctive semi-symbolic node from the input
    tensor $\mathbf{x}$ is the same as the truth value of the rule head $h$
    evaluated with the joint ASP program of the translated rule and facts.
\end{proposition}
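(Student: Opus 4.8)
The plan is to evaluate the raw pre-activation output $g(\mathbf{x})$ of the conjunctive node in closed form and read off its sign, since $b = \top$ exactly when $\tanh(g(\mathbf{x})) > 0$, and $\tanh$ is strictly increasing with $\tanh(0)=0$, so $b$ depends only on the sign of $g(\mathbf{x})$. First I would substitute the weight condition into the bias of Eq~(\ref{eq:ss-bias}): every non-zero weight has absolute value $6$ and at least one is non-zero, so $\max_i|w_i| = 6$ and $\sum_i|w_i| = 6|\mathcal{X}|$, whence $\beta = 6 - 6|\mathcal{X}|$ with $\delta = 1$. Therefore $g(\mathbf{x}) = \sum_{i\in\mathcal{X}} w_i x_i + 6 - 6|\mathcal{X}|$, and splitting the sum over $\mathcal{X}^+$ and $\mathcal{X}^-$ gives $\sum_{i\in\mathcal{X}} w_i x_i = 6\sum_{i\in\mathcal{X}^+} x_i - 6\sum_{i\in\mathcal{X}^-} x_i$.

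Next I would do a two-way case split on whether the translated rule body $\bigwedge_{i\in\mathcal{X}^+} a_i, \bigwedge_{i\in\mathcal{X}^-}(\text{not}\ a_i)$ is satisfied by the translated facts. Because $a_i$ is a fact iff $x_i = 1$ and no rule other than the facts has an $a_i$ in its head, the body holds precisely when $x_i = 1$ for all $i\in\mathcal{X}^+$ and $x_i = -1$ for all $i\in\mathcal{X}^-$; in that case $\sum_{i\in\mathcal{X}} w_i x_i = 6|\mathcal{X}^+| + 6|\mathcal{X}^-| = 6|\mathcal{X}|$, so $g(\mathbf{x}) = 6 > 0$ and $b = \top$. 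If the body fails, let $k \ge 1$ be the number of violating indices (the $i\in\mathcal{X}^+$ with $x_i = -1$ together with the $i\in\mathcal{X}^-$ with $x_i = 1$); each flips a $+6$ contribution to $-6$, so $\sum_{i\in\mathcal{X}} w_i x_i = 6|\mathcal{X}| - 12k$ and $g(\mathbf{x}) = 6 - 12k \le -6 < 0$, giving $b = \bot$. This also re-derives Remark~\ref{remark:conj-node-output}: the raw output is $6$ or $\le -6$, never $0$.

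Finally I would close the loop on the ASP side. The joint program is the single normal rule with head $h$ together with the facts over the atoms $a_i$; since the only rules whose heads are $a_i$ are those facts, the program is stratified and $h$ occurs in no body, so it has a unique stable model, and $h$ belongs to it iff the rule body is satisfied. Combining this with the case analysis, $h$ holds in the joint program $\iff$ the body is satisfied $\iff g(\mathbf{x}) > 0 \iff b = \top$, which is the claim. The disjunctive case is symmetric: the bias contribution and the inequalities flip sign (the raw output is $-6$ when the body of each translated clause fails and $\ge 6$ otherwise), matching the translation of a disjunctive node into several clauses sharing the head $h$.

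The step I expect to require the most care is not the arithmetic with the bias formula but the ASP-semantics bookkeeping: I must be explicit that with the chosen NAF-style fact translation, "$\text{not}\ a_i$" is true in the intended model exactly when $x_i = -1$, that the program admits a single stable model, and that adding the translated rule introduces neither spurious models nor an unfounded support for $h$. Once that is pinned down, the rest is the direct computation above.
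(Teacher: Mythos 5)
Your proposal is correct and follows essentially the same route as the paper's proof: the same bias computation $\beta = 6 - 6|\mathcal{X}|$, the same closed form $g(\mathbf{x}) = 6(1 - 2k)$ where $k$ counts the violating indices (the paper's $|\mathcal{X}^{+-}| + |\mathcal{X}^{-+}|$), and the same exhaustive two-case analysis concluding $g(\mathbf{x}) = 6$ or $g(\mathbf{x}) \le -6$. The only cosmetic difference is that you case-split on whether the rule body is satisfied and derive the sign of $g(\mathbf{x})$, whereas the paper case-splits on the value of $g(\mathbf{x})$ and derives body satisfaction; since both sides are binary and the cases exhaust all possibilities, the two arguments are equivalent, and your added care about the ASP semantics (unique stable model, no unfounded support for $h$) only tightens a point the paper treats more informally.
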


\begin{proof}
    Under Condition(\ref{condition:neural-to-asp-weight}), the bias of the
    conjunctive node is calculated as follows, using Eq~(\ref{eq:ss-bias}):
    \begin{align*}
        \beta = \max_{i=1}^{I} |w_i| - \sum_{i=1}^{I} |w_i| = 6 - \sum_{i \in \mathcal{X}} 6 = 6 - 6|\mathcal{X}|
    \end{align*}

    So we calculate the output of the conjunctive node as follows:
    \begin{align*}
        g(\mathbf{x}) & = \sum_{i=1}^{I} w_i x_i + \beta                                                             \\
                      & = 6 \sum_{i \in \mathcal{X}^+} x_i - 6 \sum_{i \in \mathcal{X}^-} x_i + (6 - 6|\mathcal{X}|)
    \end{align*}

    Based on the input values specified in
    Condition~(\ref{condition:neural-to-asp-input}), we can further split $\mathcal{X}^+$
    and $\mathcal{X}^-$ into four mutually exclusive subsets:
    \begin{align*}
        \mathcal{X}^{++}           & = \{i | i \in \mathcal{X}^+ , x_i = 1\}  \\
        \mathcal{X}^{+-}           & = \{i | i \in \mathcal{X}^+ , x_i = -1\} \\
        \mathcal{X}^{-\phantom{}-} & = \{i | i \in \mathcal{X}^- , x_i = -1\} \\
        \mathcal{X}^{-+}           & = \{i | i \in \mathcal{X}^- , x_i = 1\}
    \end{align*}

    By construction, we have $\mathcal{X}^{++} \cap \mathcal{X}^{+-} = \emptyset$, $\mathcal{X}^{-\phantom{}-}
        \cap \mathcal{X}^{-+} = \emptyset$, and the followings:
    \begin{align*}
        |\mathcal{X}^{++}| + |\mathcal{X}^{+-}|                                                     & = |\mathcal{X}^+| \\
        |\mathcal{X}^{-\phantom{}-}| + |\mathcal{X}^{-+}|                                           & = |\mathcal{X}^-| \\
        |\mathcal{X}^{++}| + |\mathcal{X}^{+-}| + |\mathcal{X}^{-\phantom{}-}| + |\mathcal{X}^{-+}| & = |\mathcal{X}|
    \end{align*}

    Thus, we can calculate the output of the conjunctive node as:
    \begin{align}
        g(\mathbf{x}) & = 6 \sum_{i \in \mathcal{X}^+} x_i - 6 \sum_{i \in \mathcal{X}^-} x_i + (6 - 6|\mathcal{X}|) \nonumber                                                                                                 \\
                      & =6 \left( \sum_{i \in \mathcal{X}^{++}} 1 + \sum_{i \in \mathcal{X}^{+-}} (-1) -  \sum_{i \in \mathcal{X}^{-\phantom{}-}}(-1) - \sum_{i \in \mathcal{X}^{-+}} 1  + 1 - |\mathcal{X}| \right) \nonumber \\
                      & = 6(|\mathcal{X}^{++}| - |\mathcal{X}^{+-}| + |\mathcal{X}^{-\phantom{}-}| - |\mathcal{X}^{-+}| - |\mathcal{X}| + 1) \nonumber                                                                         \\
                      & = 6(1 - 2|\mathcal{X}^{+-}| - 2|\mathcal{X}^{-+}|) \label{proof:1-ssl-output}
    \end{align}

    Now, based on Remark~\ref{remark:conj-node-output}, we consider the two case
    of $g(\mathbf{x})$.

    \phantom{}

    \textbf{Case 1}: $g(\mathbf{x}) = 6 > 0$. The conjunctive node's bivalent
    logical interpretation $b = \top$. We need to prove that $h$ from
    Translation~(\ref{eq:translation-neural-to-asp-rule}) is in the answer set
    (i.e. $h$ is true).

    By $g(\mathbf{x}) = 6$ and Eq~(\ref{proof:1-ssl-output}):
    \begin{align}
        1 - 2|\mathcal{X}^{+-}| - 2|\mathcal{X}^{-+}|                          & = 1 \nonumber                         \\
        |\mathcal{X}^{+-}| + |\mathcal{X}^{-+}|                                & = 0 \nonumber                         \\
        \Rightarrow \mathcal{X}^{++}=\mathcal{X}^+, \mathcal{X}^{-\phantom{}-} & = \mathcal{X}^-\label{proof:1-case-1}
    \end{align}

    Combine Eq~(\ref{proof:1-case-1}) with
    Translation~(\ref{eq:translation-neural-to-asp-facts}), the ASP
    facts are:
    \begin{equation} \label{proof:1-case-1-asp-facts-program}
        \{ a_i. | i \in \mathcal{X}^+ |\}
    \end{equation}

    Since:
    \begin{align*}
        \forall i \in \mathcal{X}^+. a_i \text{ is true}             & \Rightarrow \bigwedge_{i \in \mathcal{X}^+} a_i \text{ is true}               \\
        \forall i \in \mathcal{X}^-. \text{not}\ a_i \text{ is true} & \Rightarrow \bigwedge_{i \in \mathcal{X}^-} (\text{not}\ a_i) \text{ is true}
    \end{align*}

    The combined program of (\ref{proof:1-case-1-asp-facts-program}) and
    (\ref{eq:translation-neural-to-asp-rule}) has the answer set with $h$ in it
    (i.e. $h$ is true).

    \phantom{}

    \textbf{Case 2}: $g(\mathbf{x}) \leq -6 < 0$. The conjunctive node's
    bivalent logical interpretation $b = \bot$. We need to prove that $h$ from
    Translation (\ref{eq:translation-neural-to-asp-rule}) is not in the answer
    set (i.e. $h$ is false).

    By $g(\mathbf{x}) \leq -6$ and Eq~(\ref{proof:1-ssl-output}):
    \begin{align}
        1 - 2|\mathcal{X}^{+-}| - 2|\mathcal{X}^{-+}| & \leq -1 \nonumber             \\
        |\mathcal{X}^{+-}| + |\mathcal{X}^{-+}|       & \geq 1 \label{proof:1-case-2}
    \end{align}

    We can split Eq~(\ref{proof:1-case-2}) into three cases:

    \phantom{}

    \textbf{(a)}: If $|\mathcal{X}^{+-}| = 0, |\mathcal{X}^{-+}| \geq 1$.

    By $|\mathcal{X}^{+-}| = 0$, we have $\mathcal{X}^{++} = \mathcal{X}^+$.
    Combined with Translation~(\ref{eq:translation-neural-to-asp-facts}), we
    have the ASP facts:
    \begin{equation} \label{proof:1-case-2a-asp-facts-program}
        \{ a_i. | i \in \mathcal{X}^+ \cup i \in \mathcal{X}^{-+}\}
    \end{equation}

    We have:
    \begin{align*}
        \forall i \in \mathcal{X}^+. a_i \text{ is true} & \Rightarrow \bigwedge_{i \in \mathcal{X}^+} a_i \text{ is true}                \\
        \exists i \in \mathcal{X}^-. a_i \text{ is true} & \Rightarrow \bigwedge_{i \in \mathcal{X}^-} (\text{not}\ a_i) \text{ is false}
    \end{align*}

    Thus, the combined program of (\ref{proof:1-case-2a-asp-facts-program}) and
    (\ref{eq:translation-neural-to-asp-rule}) has the answer set with $h$ not in
    it (i.e. $h$ is false).

    \phantom{}

    \textbf{(b)}: If $|\mathcal{X}^{+-}| \geq 1, |\mathcal{X}^{-+}| = 0$.

    By $|\mathcal{X}^{+-}| \geq 1$, we have $\mathcal{X}^{+-} \neq
        \mathcal{X}^+$. And by $|\mathcal{X}^{-+}| = 0$, we have
    $\mathcal{X}^{-\phantom{}-} = \mathcal{X}^-$. Combine with
    Translation~(\ref{eq:translation-neural-to-asp-facts}), we have the ASP
    facts:
    \begin{equation} \label{proof:1-case-2b-asp-facts-program}
        \{ a_i. | i \in \mathcal{X}^{++}\}
    \end{equation}

    We have:
    \begin{align*}
        \exists i \in \mathcal{X}^+. a_i \text{ is not true} & \Rightarrow \bigwedge_{i \in \mathcal{X}^+} a_i \text{ is false} \\
    \end{align*}

    Thus, the combined program of (\ref{proof:1-case-2b-asp-facts-program}) and
    (\ref{eq:translation-neural-to-asp-rule}) has the answer set with $h$ not in
    it (i.e. $h$ is false).

    \phantom{}

    \textbf{(c)}: If $|\mathcal{X}^{+-}| \geq 1, |\mathcal{X}^{-+}| \geq 1$.

    This case is a combination of the previous two cases, and we can similarly
    prove that $h$ is not in the answer set (i.e. $h$ is false).

    We have proved that the bivalent logical interpretation $b$ of the
    conjunctive semi-symbolic node equals the translated rule head $h$'s truth
    value in both cases.
\end{proof}

\subsection{Bivalent Logic to Neural Translation with Truth Value Equivalence}

There are $I$ different possible facts $a_1, a_2, ... a_I$. Consider an ASP rule
with head $h$ in the form of:
\begin{equation} \label{eq:translation-asp-to-neural-rule}
    h \leftarrow \bigwedge_{i \in \mathcal{X}^+} a_i, \bigwedge_{i \in \mathcal{X}^-} (\text{not}\ a_i)
\end{equation}
where $\mathcal{X}^+$ and $\mathcal{X}^-$ are the sets of IDs of the positive
and negative literals in the rule. The following conditions hold:
\begin{align}
    \mathcal{X}^+  \subseteq I, \mathcal{X}^- \subseteq I, & |\mathcal{X}^+| \geq 0, |\mathcal{X}^-| \geq 0 \label{condition:asp-to-neural-1} \\
    \mathcal{X}^+ \cap \mathcal{X}^-                       & = \emptyset \label{condition:asp-to-neural-2}
\end{align}

We define the set $\mathcal{X} = \mathcal{X}^+ \cup \mathcal{X}^-$, and
$|\mathcal{X}| \neq 0$.

This rule can be translated to a conjunctive semi-symbolic node with $I$ inputs
and $\delta = 1$, and the weights are:
\begin{equation} \label{eq:translation-asp-to-neural-weights}
    \forall i \in \{1..I\}.\ \ w_i = \begin{cases}
        6  & \text{if } i \in \mathcal{X}^+ \\
        -6 & \text{if } i \in \mathcal{X}^- \\
        0  & \text{otherwise}
    \end{cases}
\end{equation}

Again, let $b$ be the bivalent logical interpretation of the conjunctive node as
defined in Definition \ref{def:gbl}.

We translate an ASP program of a set of facts $\mathcal{P} = \{a_i. | i \in
    \{1..I\}\}$ to an input tensor $\mathbf{x}$ as:
\begin{equation} \label{eq:translation-asp-to-neural-input}
    \forall i \in \{1..I\}.\ \ x_i = \begin{cases}
        1  & \text{if } a_i \in \mathcal{P} \\
        -1 & \text{otherwise}
    \end{cases}
\end{equation}

\begin{proposition}
    Given an ASP rule in the form of (\ref{eq:translation-asp-to-neural-rule})
    with head $h$ that satisfies Conditions (\ref{condition:asp-to-neural-1})
    and (\ref{condition:asp-to-neural-2}) and its translated conjunctive
    semi-symbolic node with weights based on Translation
    (\ref{eq:translation-asp-to-neural-weights}), and the set of facts ASP
    program and its translated input tensor based on Translation
    (\ref{eq:translation-asp-to-neural-input}), the truth value of $h$ evaluated
    with the joint ASP program of the rule and facts is the same as the bivalent
    logical interpretation $b$ of the translated conjunctive semi-symbolic node
    computed from the translated input tensor.
\end{proposition}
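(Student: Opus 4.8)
The plan is to reduce this statement to the proposition already established in the previous subsection (neural-to-bivalent translation), by observing that Translations~(\ref{eq:translation-asp-to-neural-weights}) and~(\ref{eq:translation-asp-to-neural-input}) are exact inverses of Translations~(\ref{eq:translation-neural-to-asp-rule}) and~(\ref{eq:translation-neural-to-asp-facts}). First I would verify that the node produced by~(\ref{eq:translation-asp-to-neural-weights}) is a legitimate input to the earlier proposition: its weight vector lies in $\{-6,0,6\}^I$ by construction, and since $|\mathcal{X}| = |\mathcal{X}^+ \cup \mathcal{X}^-| \neq 0$ at least one weight is nonzero, so Condition~(\ref{condition:neural-to-asp-weight}) holds; likewise the tensor from~(\ref{eq:translation-asp-to-neural-input}) has every coordinate in $\{-1,1\}$, so Condition~(\ref{condition:neural-to-asp-input}) holds. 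Conditions~(\ref{condition:asp-to-neural-1}) and~(\ref{condition:asp-to-neural-2}) also guarantee $\mathcal{X}^+$ and $\mathcal{X}^-$ are disjoint, which is exactly what makes the weight assignment in~(\ref{eq:translation-asp-to-neural-weights}) well-defined.

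Next I would check the round trip. For the node built by~(\ref{eq:translation-asp-to-neural-weights}), the set of indices with weight $6$ is exactly $\mathcal{X}^+$ and the set with weight $-6$ is exactly $\mathcal{X}^-$; hence applying the forward translation~(\ref{eq:translation-neural-to-asp-rule}) to this node returns, literal for literal, the rule~(\ref{eq:translation-asp-to-neural-rule}) we started with. Similarly, applying~(\ref{eq:translation-neural-to-asp-facts}) to the tensor from~(\ref{eq:translation-asp-to-neural-input}) yields $\{a_i. \mid x_i = 1\} = \{a_i. \mid a_i \in \mathcal{P}\} = \mathcal{P}$. So the joint program obtained by forward-translating the constructed node and tensor coincides rule-for-rule and fact-for-fact with the joint program of the original rule~(\ref{eq:translation-asp-to-neural-rule}) and facts $\mathcal{P}$. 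With these two observations the previously proved proposition applies verbatim: the bivalent interpretation $b$ of the constructed node computed from the tensor equals the truth value of $h$ in the joint program of its forward-translated rule and facts, which by the round-trip identification is precisely the rule~(\ref{eq:translation-asp-to-neural-rule}) together with $\mathcal{P}$. That is the claim. (For completeness one could instead recompute $g(\mathbf{x}) = 6\bigl(1 - 2|\mathcal{X}^{+-}| - 2|\mathcal{X}^{-+}|\bigr)$ as in the forward proof and match the cases $g(\mathbf{x}) = 6$ and $g(\mathbf{x}) \leq -6$ against satisfaction of the rule body, but the inverse-translation route avoids repeating that case analysis.)

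The main obstacle I anticipate is bookkeeping rather than mathematics: being careful that the two translation schemes really are mutual inverses on the nose, in particular that the treatment of false inputs stays consistent. A coordinate $x_i = -1$ produces no fact $a_i$, so $a_i$ is false by negation as failure, which is exactly the reading used in~(\ref{eq:translation-neural-to-asp-rule}); as the paper already notes, choosing classical negation instead would not change the head's truth value. I would also confirm that the degenerate cases permitted by Condition~(\ref{condition:asp-to-neural-1}), namely $\mathcal{X}^+ = \emptyset$ or $\mathcal{X}^- = \emptyset$ (but not both, since $|\mathcal{X}| \neq 0$), require no special handling: the corresponding empty conjunction in~(\ref{eq:translation-asp-to-neural-rule}) is vacuously true, and the corresponding absence of $+6$ or $-6$ weights is mirrored exactly on the neural side, so the reduction still goes through unchanged.
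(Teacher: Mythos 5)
Your proof is correct, but it takes a genuinely different route from the paper's. The paper proves this proposition directly and self-containedly: it recomputes the bias $\beta = 6 - 6|\mathcal{X}|$, derives $g(\mathbf{x}) = 6\left(1 - 2|\mathcal{X}^{+-}| - 2|\mathcal{X}^{-+}|\right)$ afresh, and case-splits on whether $h$ is in the answer set, showing $g(\mathbf{x}) = 6$ when $h$ is true and $g(\mathbf{x}) \leq -6$ when $h$ is false --- essentially mirroring the forward proof with the roles of hypothesis and conclusion swapped. You instead reduce the claim to the already-proven forward proposition by observing that Translations (\ref{eq:translation-asp-to-neural-weights}) and (\ref{eq:translation-asp-to-neural-input}) are exact inverses of Translations (\ref{eq:translation-neural-to-asp-rule}) and (\ref{eq:translation-neural-to-asp-facts}) on the relevant objects, so the forward equivalence $b = h$ transfers verbatim. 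This works because the forward proposition asserts a symmetric equality of truth values, and your checks (the constructed node satisfies Condition (\ref{condition:neural-to-asp-weight}) since $|\mathcal{X}| \neq 0$, the tensor satisfies Condition (\ref{condition:neural-to-asp-input}), and the round trip reproduces the rule literal-for-literal and the fact set $\mathcal{P}$ exactly) are precisely what the reduction needs. Your route buys economy and makes the bidirectionality structurally explicit --- one equivalence proof plus an inverse-translation lemma --- whereas the paper's direct computation buys a second, independent verification that does not rely on the reader trusting the round-trip bookkeeping, and its case split on the logical side ($h$ true versus false) reads more naturally as a statement about ASP semantics. Your handling of the NAF convention and of the degenerate cases $\mathcal{X}^+ = \emptyset$ or $\mathcal{X}^- = \emptyset$ is also sound.
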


\begin{proof}
    Let the translated conjunctive node's output be $g(\mathbf{x})$ and the
    bias of the node be $\beta$. Under the Conditions of
    (\ref{condition:asp-to-neural-1}) and (\ref{condition:asp-to-neural-2}), the
    bias is calculated as:
    \begin{align*}
        \beta = \max_{i=1}^{I} |w_i| - \sum_{i=1}^{I} |w_i| = 6 - 6|\mathcal{X}|
    \end{align*}

    So the output of the conjunctive node is:
    \begin{align} \label{eq:asp-to-neural-ss-output}
        g(\mathbf{x}) & = \sum_{i=1}^{I} w_i x_i + \beta \nonumber                                                   \\
                      & = 6 \sum_{i \in \mathcal{X}^+} x_i - 6 \sum_{i \in \mathcal{X}^-} x_i + (6 - 6|\mathcal{X}|)
    \end{align}

    We consider two cases:

    \phantom{}

    \textbf{Case 1}: $h$ is in the answer set (i.e. $h$ is true).

    From it, we know that:
    \begin{align*}
        \forall i \in \mathcal{X}^+. & a_i \text{ is true}             \\
        \forall i \in \mathcal{X}^-. & \text{not}\ a_i \text{ is true}
    \end{align*}

    Combined with Translation~(\ref{eq:translation-asp-to-neural-input}), we
    have:
    \begin{align*}
        \forall i \in \mathcal{X}^+. & x_i = 1  \\
        \forall i \in \mathcal{X}^-. & x_i = -1
    \end{align*}

    Substitute the values into the conjunctive node's output in
    Eq~(\ref{eq:asp-to-neural-ss-output}):
    \begin{align*}
        g(\mathbf{x}) & = 6 \sum_{i \in \mathcal{X}^+} 1 - 6 \sum_{i \in \mathcal{X}^-} (-1) + (6 - 6|\mathcal{X}|) \\
                      & = 6(|\mathcal{X}^+| + |\mathcal{X}^-| + 1 - |\mathcal{X}|)
    \end{align*}

    By Condition~(\ref{condition:asp-to-neural-2}), we have $|\mathcal{X}^+| + |\mathcal{X}^-| =
        |\mathcal{X}|$. Thus:
    \begin{align*}
        g(\mathbf{x}) & = 6 > 0    \\
        b             & = \top = h
    \end{align*}

    \phantom{}

    \textbf{Case 2}: $h$ is not in the answer set (i.e. $h$ is false).

    We further split $\mathcal{X}^+$ and $\mathcal{X}^-$ into four mutually exclusive subsets:
    \begin{equation} \label{eq:asp-to-neural-case-2-subsets}
        \begin{split}
            \mathcal{X}^{++}           & = \{i | i \in \mathcal{X}^+ , a_i \in \mathcal{P}\}    \\
            \mathcal{X}^{+-}           & = \{i | i \in \mathcal{X}^+ , a_i \notin \mathcal{P}\} \\
            \mathcal{X}^{-\phantom{}-} & = \{i | i \in \mathcal{X}^- , a_i \notin \mathcal{P}\} \\
            \mathcal{X}^{-+}           & = \{i | i \in \mathcal{X}^- , a_i \in \mathcal{P}\}
        \end{split}
    \end{equation}

    For $h$ to be not in the answer set, the following must hold:
    \begin{align*}
        |\mathcal{X}^{+-}| + |\mathcal{X}^{-+}| \geq 1
    \end{align*}
    such that either or both of $\bigwedge_{i \in \mathcal{X}^{+-}} a_i$ and
    $\bigwedge_{i \in \mathcal{X}^{-+}} (\text{not}\ a_i)$ are false.

    Combine the subsets in (\ref{eq:asp-to-neural-case-2-subsets}) with
    Translation~(\ref{eq:translation-asp-to-neural-input}) and the input tensor
    $\mathbf{x}$ should be like:
    \begin{align*}
        \forall i \in \mathcal{X}^{++}.           & x_i = 1  \\
        \forall i \in \mathcal{X}^{+-}.           & x_i = -1 \\
        \forall i \in \mathcal{X}^{-\phantom{}-}. & x_i = -1 \\
        \forall i \in \mathcal{X}^{-+}.           & x_i = 1
    \end{align*}

    Substitute the values into the output of the conjunctive node in
    Eq~(\ref{eq:asp-to-neural-ss-output}), we have:
    \begin{align*}
        g(\mathbf{x}) & = 6 \left(\sum_{i \in \mathcal{X}^{++}} 1 + \sum_{i \in \mathcal{X}^{+-}} (-1)\right) - 6 \left(\sum_{i \in \mathcal{X}^{-\phantom{}-}} (-1) + \sum_{i \in \mathcal{X}^{-+}} 1\right)+ (6 - 6|\mathcal{X}|) \\
                      & = 6(|\mathcal{X}^{++}| - |\mathcal{X}^{+-}| + |\mathcal{X}^{-\phantom{}-}| - |\mathcal{X}^{-+}| - |\mathcal{X}| + 1)                                                                                        \\
                      & = 6(1 - 2|\mathcal{X}^{+-}| - 2|\mathcal{X}^{-+}|)
    \end{align*}

    Since $|\mathcal{X}^{+-}| + |\mathcal{X}^{-+}| \geq 1$, we have $1 - 2|\mathcal{X}^{+-}| - 2|\mathcal{X}^{-+}| \leq
        -1$. Hence:
    \begin{align*}
        g(\mathbf{x}) & \leq -6    \\
        \Rightarrow b & = \bot = h
    \end{align*}

    \phantom{}

    We have proved that the rule head $h$'s truth value equals the bivalent
    logical interpretation $b$ of the translated conjunctive semi-symbolic node
    in both cases.
\end{proof}

  \section{Training Details}

We use PyTorch \cite{pytorch} for implementing the neural DNF-MT actors and MLP
baselines, and CleanRL \cite{cleanrl} for the base implementation of
actor-criticy PPO. We also adopt the training technique of $\delta$ scheduling
for neural DNF-based models used in \cite{pix2rule} and
\cite{ns-classifications}. At each training iteration $i$, the $\delta$ value is
calculated as:

\begin{align*}
      m                   & = {\left\lfloor \frac{i - d}{s} \right\rfloor + 1} \\
      \delta_{\text{new}} & = \delta_{\text{initial}} * r ^ m
\end{align*}
where $d$ is the delay, $s$ is the `decay' step size, $\delta_{\text{initial}}$
is the initial $\delta$ value and $r$ is the `decay' rate, which all are
hyperparameters.

The implementation of neural DNF-based models is at
\url{https://github.com/kittykg/neural-dnf} and our training and evaluation code
is at \url{https://github.com/kittykg/neural-dnf-mt-policy-learning}.

\subsection{Base PPO Loss} \label{appendix:ppo-loss}

The PPO loss function based on the original PPO paper \cite{ppo} and CleanRL
\cite{cleanrl} is represented as:
\begin{align*}
      L^{\text{PPO}}(\theta)         & = \mathbb{E}_t \left[
            L^{\text{CLIP}}(\theta) +
            c_1 L^{\text{value}}(\theta) - c_2 S[\pi_\theta](s_t)
      \right]                                                                  \\
      \intertext{where}
      r_t(\theta)                    & = \frac{\pi_\theta(a_t | s_t)}
      {\pi_{\theta_{\text{old}}}(a_t | s_t)},                                  \\
      L^{\text{CLIP}}(\theta)        & = \mathbb{E}_t \left[
      \min \left( r_t(\theta) \hat{A}_t,
      \text{clip}(r_t(\theta), 1 - \epsilon, 1 + \epsilon) \hat{A}_t \right)
      \right],                                                                 \\
      S[\pi_\theta](s_t)             & = -\mathbb{E}_t \left[
            \sum_a \pi_\theta(a | s_t) \log \pi_\theta(a | s_t)
      \right],                                                                 \\
      L^{\text{unclipped v}}(\theta) & = (V_\theta(s_t) - V_t)^2,              \\
      V^{\text{clipped}}(\theta)     & = V_{\text{old}}(s_t) +
      \text{clip}(V_\theta(s_t) - V_{\text{old}}(s_t), -\epsilon, \epsilon),   \\
      L^{\text{clipped v}}(\theta)   & = (V^{\text{clipped}}(\theta) - V_t)^2, \\
      L^{\text{value}}(\theta)       & = \mathbb{E}_t \left[
            \max \left( L^{\text{unclipped v}}(\theta),
            L^{\text{clipped v}}(\theta) \right) \right]
\end{align*}

\begin{itemize}
      \item $c_1$ and $c_2$: hyperparameters, coefficients of the value loss and
            the entropy
      \item $\pi_\theta$: policy of the model with parameter $\theta$
      \item $a_t$: action at time $t$
      \item $s_t$: state at time $t$
      \item $\hat{A}_t$: advantage estimate at time $t$
      \item $\text{clip}(x, a, b)$: function, clip $x$ to the range $[a, b]$
      \item $\epsilon$: hyperparameter, clipping range
      \item $S[\pi_\theta](s_t)$: entropy of the policy at state $s_t$
      \item $V_\theta(s_t)$: value predicted by the model with parameter
            $\theta$, during loss calculation
      \item $V_t$: value target at time $t$
      \item $V_{\text{old}}(s_t)$: value predicted by the model with parameter
            $\theta$, during collection
\end{itemize}

\subsection{Post-training Processing of Neural DNF-MT Model} \label{appendix:post-training-process}

The post-training processing is based on the procedure described in
\cite{pix2rule} and \cite{ns-classifications}. We modify the pruning and rule
extraction stages to be better fitted for policy learning. We provide
additional information for some stages in the post-training processing below.

\textbf{(1) Pruning}: In experiments, we pass over conj.-to-disj. edges
(weights) first before the input-to-conj. edges. We use $\tau_{\text{prune}} =
      1e-3$ for the removal check.

\label{alg:problog-rule-extraction}\textbf{(4.a) ProbLog rules extraction} -
pseudocode

\begin{enumerate}
      \item Condensation via logical equivalence
            \begin{itemize}
                  \item Find all the conjunctions that are logically equivalent,
                        i.e. check if the conjunctions are the same
            \end{itemize}
      \item Rule simplification based on experienced observations
            \begin{itemize}
                  \item Based on all possible inputs to the conjunctive layer,
                        compute all unique activations of the conjunctive layer.

                  \item If any conjunctive node's activation is always
                        interpreted as true/false, that conjunction does not
                        need to be in the annotated disjunction (unless there is
                        no other conjunction in the rule body).

                  \item This step is optional if we cannot enumerate all
                        possible input. If we can, it will reduce the number of
                        ProbLog annotated disjunctions.
            \end{itemize}
      \item Generate ProbLog annotated disjunctions based on experienced
            observations (as described in
            \hyperref[sec:problog-extraction]{post-training step (4.a)})
            \begin{itemize}
                  \item Compute the probabilities from the mutex-tanh output for
                        each unique conjunctive activation. The rule body is
                        translated from the conjunctive activation, and the
                        probabilities are used in the annotated disjunction
                        head.
            \end{itemize}
\end{enumerate}

  \section{RL Environment Details}\label{appendix:rl-env-details}

Our experiments use Gymnasium \cite{gymnasium} APIs and environments. Our custom
environments (Switcheroo Corridor and Door Corridor environments) are
implemented with Gymnasium APIs and are available at
\url{https://github.com/kittykg/corridor-grid}.

\subsection{Switcheroo Corridor}\label{appendix:ss-corridor}

A Switcheroo Corridor environment is created based on the sample environment
from \cite[Chapter 13.1]{rl-book-sutton-barto}. The reward is -1 for each step.
The episode is terminated when the agent reaches the goal state or is truncated
after the maximum number of steps. By default, the truncation step limit is 50.

Table \ref{tab:ss-corridor-env-set-config} shows the configuration of the three
Switcheroo Corridor environments used in Section \ref{sec:ss-corridor}. LC-5 and
LC-11 are shown in Figure \ref{fig:lc-5-env} and Figure \ref{fig:lc-11-env}
respectively.

\begin{figure}[H]
    \centering
    \includegraphics[width=0.8\columnwidth]{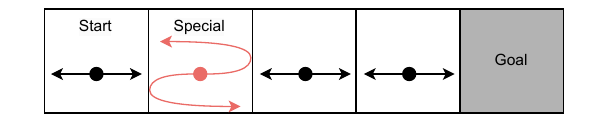}
    \caption{Long Corridor-5 environment, created according to the configuration
        shown in Table \ref{tab:ss-corridor-env-set-config}.}
    \label{fig:lc-5-env}
    \Description{Long Corridor-5 environment - an image visualising the
        environment.}
\end{figure}

\begin{figure}[H]
    \centering
    \includegraphics[width=\columnwidth]{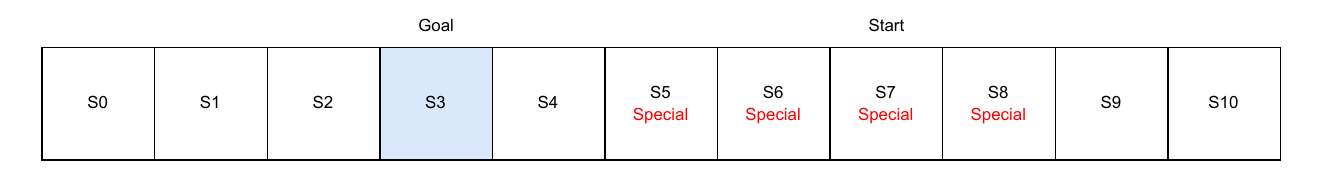}
    \caption{Long Corridor-11 environment, created according to the
        configuration shown in Table \ref{tab:ss-corridor-env-set-config}.}
    \label{fig:lc-11-env}
    \Description{Long Corridor-11 environment - an image visualising the
        environment.}
\end{figure}

\begin{table}[H]
    \caption{Environment configurations for the three environments in Switcheroo
        Corridor Environment Set. This table is almost identical to Table
        \ref{tab:ss-corridor-env-set-config}, but with the addition of SC's
        configuration.}
    \label{tab:ss-corridor-env-set-config-appendix}
    \begin{tabular}{ccccc} \toprule
        \begin{tabular}[c]{@{}c@{}}Name\\ (Short code)\end{tabular}        & \begin{tabular}[c]{@{}c@{}}Corridor\\ Length\end{tabular} & \begin{tabular}[c]{@{}c@{}}Start\\ State\end{tabular} & \begin{tabular}[c]{@{}c@{}}Goal\\ State\end{tabular} & \begin{tabular}[c]{@{}c@{}}Special\\ State(s)\end{tabular} \\ \midrule
        \begin{tabular}[c]{@{}c@{}}Small Corridor\\ (SC)\end{tabular}      & 4                                                         & 0                                                     & 3                                                    & {[}1{]}                                                    \\
        \begin{tabular}[c]{@{}c@{}}Long Corridor-5\\ (LC-5)\end{tabular}   & 5                                                         & 0                                                     & 4                                                    & {[}1{]}                                                    \\
        \begin{tabular}[c]{@{}c@{}}Long Corridor-11\\ (LC-11)\end{tabular} & 11                                                        & 7                                                     & 3                                                    & {[}5, 6, 7, 8{]}                                           \\ \bottomrule
    \end{tabular}
\end{table}

\subsection{Blackjack}

The Blackjack environment from \cite[Chapter 5.1]{rl-book-sutton-barto} is a
simplified version of the Blackjack card game. The standard Blackjack rules can
be found at \url{https://en.wikipedia.org/wiki/Blackjack}. The environment
terminates when the agent sticks or its hand exceeds 21, and termination never
happens. It receives a reward of +1 for winning, -1 for losing and 0 for
drawing.

Our experiments use the Gymnasium's implementation at
\url{https://Gymnasium.farama.org/environments/toy_text/blackjack/}. Instead of
using Gymnasium's tuple observation (the player's current sum, the dealer's
hand, and the player's useable ace), we convert each value to a one-hot encoding
but in the range $\{-1, 1\}$ and stack them together as the observation. There
are 44 bits in total in the final observation encoding. The observation input to
the neural DNF-MT actor is shown in Figure~\ref{fig:binary-ndnf-mt-ac}.

\subsection{Taxi}

The Taxi environment, purposed in \cite{taxi-og-paper}, has 4 coloured squares
where a hotel and the passenger will be allocated initially at the start. The
hotel location remains unchanged throughout the episode. The agent controls a
taxi to pick up the passenger and drop them off at the hotel. Each step gives a
-1 reward, but illegally executing `pickup' and `drop-off' gives a -10 reward.
If a passenger is successfully delivered to the hotel, the agent receives a +20
reward.

Our experiments use Gymnasium's implementation at
\url{https://Gymnasium.farama.org/environments/toy_text/taxi/}. By default, the
truncation step limit is 200.

\begin{figure}[H]
    \centering
    \includegraphics[width=0.8\columnwidth]{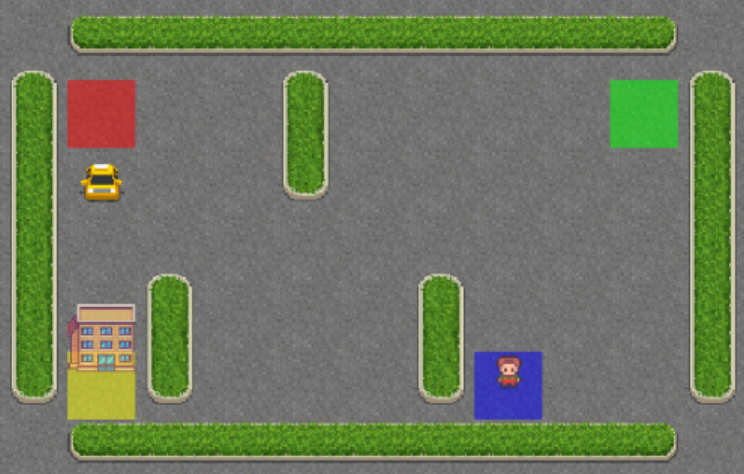}
    \caption{Taxi Environment: the taxi needs to pick up a passenger and drop
        them off at the destination hotel to finish the episode.}
    \label{fig:taxi-env}
    \Description{Taxi environment - an image visualising the environment.}
\end{figure}

\subsection{Door Corridor}

The Door Corridor environment (as shown in Figure~\ref{fig:door-corridor}) is a
custom environment created based on the MiniGrid environment \cite{minigrid}.
The implementation of the environment is based on Gymnasium's MiniGrid. We keep
the observation similar to MiniGrid's style: a 3$\times$3 grid with two channels
of objects and cell status. The agent receives a -1 reward for each step, and
the default truncation step limit is 270.

\begin{lstlisting}[
    caption={Door Corridor Environment action space and observation space.},
    label={lst:door-corridor-env},
    language=Python,
]
# Action space
class DoorCorridorAction(IntEnum):
    LEFT = 0
    RIGHT = 1
    FORWARD = 2
    TOGGLE = 3

# Observation space
class Object(IntEnum):
    UNSEEN = 0
    EMPTY = 1
    WALL = 2
    DOOR = 3
    AGENT = 4
    GOAL = 5

class State(IntEnum):
    OPEN = 0    # Most objects will be OPEN
    CLOSED = 1  # A door can be OPEN or CLOSED
\end{lstlisting}

The two variants, Door Corridor-T (DC-T) and Door Corridor-OT (DC-OT), only
differ in their termination check. DC-T terminates when the agent toggles the
goal cell without being in it, and DC-OT terminates when the agent stands on
the goal and then toggles.

Table \ref{tab:dc-optimal-policy} shows the optimal policy's actions for the
Door Corridor environment and its variants.

\begin{table}[H]
    \caption{Optimal policy's actions for the Door Corridor (DC) environment and
        its variants, with blue text showing the difference in the
        variants.}
    \label{tab:dc-optimal-policy}
    \begin{tabular}{ccc} \toprule
        Environment & \begin{tabular}[c]{@{}c@{}}Optimal policy's\\actions (text)\end{tabular}                                                                 & \begin{tabular}[c]{@{}c@{}}Optimal policy's\\actions (int)\end{tabular}                \\ \midrule
        DC          & \begin{tabular}[c]{@{}c@{}}[Right, Toggle,\\Forward, Toggle,\\Forward, Toggle,\\Forward, Forward]\end{tabular}                           & \begin{tabular}[c]{@{}c@{}}[1, 3, 2, 3,\\2, 3, 2, 2]\end{tabular}                      \\
        DC-T        & \begin{tabular}[c]{@{}c@{}}[Right, Toggle,\\Forward, Toggle,\\Forward, Toggle,\\Forward, \textcolor{blue}{Toggle}]\end{tabular}          & \begin{tabular}[c]{@{}c@{}}[1, 3, 2, 3,\\2, 3, 2, \textcolor{blue}{3}]\end{tabular}    \\
        DC-OT       & \begin{tabular}[c]{@{}c@{}}[Right, Toggle,\\Forward, Toggle,\\Forward, Toggle,\\Forward, Forward, \textcolor{blue}{Toggle}]\end{tabular} & \begin{tabular}[c]{@{}c@{}}[1, 3, 2, 3,\\2, 3, 2, 2, \textcolor{blue}{3}]\end{tabular} \\ \bottomrule
    \end{tabular}
\end{table}

  \section{Additional Experimental Results} \label{appendix:add-exp-results}

We use the clingo solver \cite{clingo} to run ASP programs.

Table \ref{tab:rl-performance} shows the performance of the models in all
environments.

\begin{table}[h]
    \caption{Performance of the models in all environments, together with the
        logic programs extracted from their corresponding neural DNF-MT actors.
        The episodic return metric is represented in the form of mean $\pm$
        standard error. For the Switcheroo Corridor environment set and Door
        Corridor environment, each model has 16 runs with different seeds. For
        Blackjack and Taxi environments, each model has 10 runs with different
        seeds. Each run is evaluated over 1,000,000 episodes and takes the
        average as the run's performance. The final metric is calculated over
        the list of each run's averaged episodic returns. All Q-tables are
        trained using Q-learning. Different symbols after the actor name
        indicate different action selection methods: * for argmax action
        selection, $\dagger$ for $\epsilon$-greedy sampling, and $\ddagger$ for
        actor's distribution sampling. }
    \label{tab:rl-performance}
    \begin{tabular}{lll} \toprule
        Environment                    & Actor Model                          & Episodic return     \\ \midrule
        \multirow{4}{*}{SC MDP}        & Q-table*                             & -3.000 $\pm$ 0.000  \\
                                       & MLP*                                 & -3.000 $\pm$ 0.000  \\
                                       & Neural DNF-MT*                       & -3.000 $\pm$ 0.000  \\
                                       & NDNF-MT: ASP*                        & -3.000 $\pm$ 0.000  \\ \hline
        \multirow{4}{*}{SC POMDP}      & Q-table$\dagger$                     & -30.683 $\pm$ 0.006 \\
                                       & MLP$\ddagger$                        & -9.342 $\pm$ 0.022  \\
                                       & Neural DNF-MT$\ddagger$              & -9.336 $\pm$ 0.019  \\
                                       & NDNF-MT: ProbLog$\ddagger$           & -9.607  $\pm$ 0.129 \\ \hline
        \multirow{4}{*}{LC-5 MDP}      & Q-table*                             & -4.000 $\pm$ 0.000  \\
                                       & MLP*                                 & -4.000 $\pm$ 0.000  \\
                                       & Neural DNF-MT*                       & -4.000 $\pm$ 0.000  \\
                                       & NDNF-MT: ASP*                        & -4.000 $\pm$ 0.000  \\ \hline
        \multirow{4}{*}{LC-5 POMDP}    & Q-table$\dagger$                     & -31.459 $\pm$ 0.004 \\
                                       & MLP$\ddagger$                        & -14.304 $\pm$ 0.047 \\
                                       & Neural DNF-MT$\ddagger$              & -14.212 $\pm$ 0.031 \\
                                       & NDNF-MT: ProbLog$\ddagger$           & -15.358 $\pm$ 0.193 \\ \hline
        \multirow{4}{*}{LC-11 MDP}     & Q-table*                             & -4.000 $\pm$ 0.000  \\
                                       & MLP*                                 & -4.000 $\pm$ 0.000  \\
                                       & Neural DNF-MT*                       & -4.000 $\pm$ 0.000  \\
                                       & NDNF-MT: ASP*                        & -4.000 $\pm$ 0.000  \\ \hline
        \multirow{4}{*}{LC-11 POMDP}   & Q-table$\dagger$                     & -31.235 $\pm$ 0.004 \\
                                       & MLP$\ddagger$                        & -17.625 $\pm$ 0.051 \\
                                       & Neural DNF-MT$\ddagger$              & -17.433 $\pm$ 0.056 \\
                                       & NDNF-MT: ProbLog$\ddagger$           & -17.361 $\pm$ 0.183 \\ \hline
        \multirow{6}{*}{Blackjack}     & Q-table \cite{rl-book-sutton-barto}* & -0.050 $\pm$ 0.001  \\
                                       & MLP*                                 & -0.045 $\pm$ 0.000  \\
                                       & MLP$\ddagger$                        & -0.057 $\pm$ 0.001  \\
                                       & Neural DNF-MT*                       & -0.050 $\pm$ 0.001  \\
                                       & Neural DNF-MT$\ddagger$              & -0.068 $\pm$ 0.001  \\
                                       & NDNF-MT: ProbLog$\ddagger$           & -0.099 $\pm$ 0.007  \\ \hline
        \multirow{6}{*}{Taxi}          & Q-table*                             & 7.913 $\pm$ 0.009   \\
                                       & MLP*                                 & 7.865 $\pm$ 0.066   \\
                                       & MLP$\ddagger$                        & 7.550 $\pm$ 0.011   \\
                                       & Neural DNF-MT*                       & 7.926 $\pm$ 0.009   \\
                                       & Neural DNF-MT$\ddagger$              & 7.424 $\pm$ 0.009   \\
                                       & NDNF-MT: ProbLog$\ddagger$           & 7.604 $\pm$ 0.139   \\ \hline
        \multirow{3}{*}{Door Corridor} & MLP*                                 & -8.000 $\pm$ 0.000  \\
                                       & Neural DNF-MT*                       & -8.000 $\pm$ 0.000  \\
                                       & NDNF-MT: ASP*                        & -8.000 $\pm$ 0.000  \\ \bottomrule
    \end{tabular}
\end{table}

\subsection{Switcheroo Corridor} \label{appendix:add-exp-switcheroo}

The model architectures are listed below:

\begin{table}[H]
    \begin{tabular}{ll} \toprule
        Model                                                         & Architecture                                                                                                                                                                \\ \midrule
        \begin{tabular}[c]{@{}l@{}}MLP actor\\MDP\end{tabular}        & \begin{tabular}[c]{@{}l@{}}(0): Linear(in=N\_STATES, out=4, bias=True)\\(1): Tanh()\\(2): Linear(in=4, out=2, bias=True)\end{tabular}                                       \\ \hline
        \begin{tabular}[c]{@{}l@{}}MLP actor\\POMDP\end{tabular}      & \begin{tabular}[c]{@{}l@{}}(0): Linear(in=2, out=4, bias=True)\\(1): Tanh()\\(2): Linear(in=4, out=2, bias=True)\end{tabular}                                               \\ \hline
        \begin{tabular}[c]{@{}l@{}}NDNF-MT\\actor\\MDP\end{tabular}   & \begin{tabular}[c]{@{}l@{}}(0): SemiSymbolic(\\\ \ \ \ \ \ in=N\_STATES, out=4, type=CONJ.)\\(1): SemiSymbolicMutexTanh(\\\ \ \ \ \ \ in=4, out=2, type=DISJ.)\end{tabular} \\ \hline
        \begin{tabular}[c]{@{}l@{}}NDNF-MT\\actor\\POMDP\end{tabular} & \begin{tabular}[c]{@{}l@{}}(0): SemiSymbolic(\\\ \ \ \ \ \ in=2, out=4, type=CONJ.)\\(1): SemiSymbolicMutexTanh(\\\ \ \ \ \ \ in=4, out=2, type=DISJ.)\end{tabular}         \\ \hline
        \begin{tabular}[c]{@{}l@{}}Critic\\MDP\end{tabular}           & \begin{tabular}[c]{@{}l@{}}(0): Linear(in=N\_STATES, out=64, bias=True)\\(1): Tanh()\\(2): Linear(in=64, out=1, bias=True)\end{tabular}                                     \\ \hline
        \begin{tabular}[c]{@{}l@{}}Critic\\POMDP\end{tabular}         & \begin{tabular}[c]{@{}l@{}}(0): Linear(in=2, out=64, bias=True)\\(1): Tanh()\\(2): Linear(in=64, out=1, bias=True)\end{tabular}                                             \\ \bottomrule
    \end{tabular}
\end{table}

The PPO hyperparameters used for training both the MLP actor and the neural
DNF-MT actor are listed below:

\begin{table}[H]
    \begin{tabular}{ll} \toprule
        Hyperparameter   & Value             \\ \midrule
        total\_timesteps & $1\mathrm{e}{5}$  \\
        learning\_rate   & $1\mathrm{e}{-2}$ \\
        num\_envs        & 8                 \\
        num\_steps       & 64                \\
        anneal\_lr       & True              \\
        gamma            & 0.99              \\
        gae\_lambda      & 0.95              \\
        num\_minibatches & 8                 \\
        update\_epochs   & 4                 \\
        norm\_adv        & True              \\
        clip\_coef       & 0.3               \\
        clip\_vloss      & True              \\
        ent\_coef        & 0.1               \\
        vf\_coef         & 1                 \\
        max\_grad\_norm  & 0.5               \\ \bottomrule
    \end{tabular}
\end{table}

For the neural DNF-MT actor, the hyperparameters of the auxiliary losses and
$\delta$ delay scheduling used are listed below:

\begin{table}[H]
    \begin{tabular}{lll} \toprule
        \begin{tabular}[c]{@{}l@{}}Hyperparameter\\Group\end{tabular}               & \begin{tabular}[c]{@{}l@{}}Hyperparameter\\ Name\end{tabular} & Value             \\ \midrule
        \multirow{3}{*}{\begin{tabular}[c]{@{}l@{}}Auxiliary\\ Loss\end{tabular}}   & dis\_weight\_reg\_lambda                                      & 0                 \\
                                                                                    & conj\_tanh\_out\_reg\_lambda                                  & 0                 \\
                                                                                    & mt\_lambda                                                    & $1\mathrm{e}{-3}$ \\ \hline
        \multirow{4}{*}{\begin{tabular}[c]{@{}l@{}}Delta\\ Scheduling\end{tabular}} & initial\_delta                                                & 0.1               \\
                                                                                    & delta\_decay\_delay                                           & 30                \\
                                                                                    & delta\_decay\_steps                                           & 5                 \\
                                                                                    & delta\_decay\_rate                                            & 1.1               \\ \bottomrule
    \end{tabular}
\end{table}

We provide more interpretability examples for Long Corridor-5 (LC-5) and Long
Corridor-11 (LC-11) environments in this section.

\noindent \textbf{LC-5.}\ \ The special state is also at state 1. Listing
\ref{code:lc5-sn-ppo-ndnf-mt-l4-1e5-aux-3112-asp} is the ASP program for a
neural DNF-MT actor in LC-5 MDP, with correctly identifying the special state,
state 1.

\begin{lstlisting}[
    label={code:lc5-sn-ppo-ndnf-mt-l4-1e5-aux-3112-asp},
    caption={ASP rules for a neural DNF-MT actor in LC-5 MDP.}
]
action(left) :- in_s_1.
action(right) :- not in_s_1.
\end{lstlisting}

Listing \ref{code:lc5-ws-ppo-ndnf-mt-l4-1e5-aux-8530-problog} is the ProbLog
rules for one neural DNF-MT actor of the runs in LC-5 POMDP. Again, with wall
status observations, the actor needs to be `flexible' when there is no wall on
either side, as shown in line 2 in Listing
\ref{code:lc5-ws-ppo-ndnf-mt-l4-1e5-aux-8530-problog}. The probability of going
left and right differs from the SC POMDP case, as there is one more state with
no walls on either side in LC5.

\begin{lstlisting}[
    label={code:lc5-ws-ppo-ndnf-mt-l4-1e5-aux-8530-problog},
    caption={ProbLog rules for a neural DNF-MT actor in LC-5 POMDP.}
]
0.139::action(left) ; 0.861::action(right) :- left_wall_present, \+ right_wall_present.
0.326::action(left) ; 0.674::action(right) :- \+ left_wall_present, \+ right_wall_present.
\end{lstlisting}

\noindent \textbf{LC-11.}\ \ The special states are 5, 6, 7 and 8, while the
agent needs to go from state 7 to state 3. The minimal path goes through $[7, 6,
        5, 4]$ in order, with action $[\text{right, right, right, left}]$.
        
Listing \ref{code:lc11-sn-ppo-ndnf-mt-l4-1e5-aux-2668-asp} is the ASP program
for a neural DNF-MT actor in LC-11 MDP. It correctly captures the special states
that it will go through in the minimal path in a single conjunction note
(`conj\_3' in line 3).

\begin{lstlisting}[
    label={code:lc11-sn-ppo-ndnf-mt-l4-1e5-aux-2668-asp},
    caption={ASP rules for a neural DNF-MT actor in LC-11 MDP.}
]
action(left) :- conj_3.
action(right) :- not conj_3.
conj_3 :- not in_s_5, not in_s_6, not in_s_7.
\end{lstlisting}

Listing \ref{code:lc11-ws-ppo-ndnf-mt-l4-1e5-aux-6707-problog} is the ProbLog
program for a neural DNF-MT actor in LC-11 POMDP. The left wall will never be
present since the agent will never see the state 0. This is reflected in the
ProbLog rules, where `left\_wall\_present' is never considered. The agent
prefers to go right when there is no right wall present. Under action sampling,
there is a chance of the agent getting to state 11, where the right wall is
present, as reflected in line 2. And we observe that the probability
distribution in state 11 is not 0\%-100\% but 75.6\%-24.4\% (line 2). This is
because the agent is not guaranteed to get to state 11 consistently, making the
training less `balanced' regarding state visiting frequency.

\begin{lstlisting}[
    label={code:lc11-ws-ppo-ndnf-mt-l4-1e5-aux-6707-problog},
    caption={ProbLog rules for a neural DNF-MT actor in LC-11 POMDP.}
]
0.244::action(left) ; 0.756::action(right) :-
    \+ right_wall_present.
0.756::action(left) ; 0.244::action(right) :-
    right_wall_present.
\end{lstlisting}

\subsection{Blackjack} \label{appendix:add-exp-blackjack}

The model architectures are listed below:

\begin{table}[H]
    \begin{tabular}{ll} \toprule
        Model         & Architecture                                                                                                                                                           \\ \midrule
        MLP actor     & \begin{tabular}[c]{@{}l@{}}(0): Linear(in=44, out=64, bias=True)\\(1): Tanh()\\(2): Linear(in=64, out=2, bias=True)\end{tabular}                                       \\ \hline
        NDNF-MT actor & \begin{tabular}[c]{@{}l@{}}(0): SemiSymbolic(\\\ \ \ \ \ \ in=44, out=64, type=CONJ.)\\(1): SemiSymbolicMutexTanh(\\\ \ \ \ \ \ in=64, out=2, type=DISJ.)\end{tabular} \\ \hline
        Critic        & \begin{tabular}[c]{@{}l@{}}(0): Linear(in=44, out=64, bias=True)\\(1): Tanh()\\(2): Linear(in=64, out=1, bias=True)\end{tabular}                                       \\ \hline
    \end{tabular}
\end{table}

The PPO hyperparameters used for training both the MLP actor and the neural
DNF-MT actor are listed below:

\begin{table}[H]
    \begin{tabular}{ll} \toprule
        Hyperparameter   & Value             \\ \midrule
        total\_timesteps & $3\mathrm{e}{5}$  \\
        learning\_rate   & $1\mathrm{e}{-3}$ \\
        num\_envs        & 32                \\
        num\_steps       & 16                \\
        anneal\_lr       & True              \\
        gamma            & 0.99              \\
        gae\_lambda      & 0.95              \\
        num\_minibatches & 16                \\
        update\_epochs   & 4                 \\
        norm\_adv        & True              \\
        clip\_coef       & 0.3               \\
        clip\_vloss      & True              \\
        ent\_coef        & 0.1               \\
        vf\_coef         & 1                 \\
        max\_grad\_norm  & 0.5               \\ \bottomrule
    \end{tabular}
\end{table}

For the neural DNF-MT actor, the hyperparameters of the auxiliary losses and
$\delta$ delay scheduling used are listed below:

\begin{table}[H]
    \begin{tabular}{lll} \toprule
        \begin{tabular}[c]{@{}l@{}}Hyperparameter\\Group\end{tabular}               & \begin{tabular}[c]{@{}l@{}}Hyperparameter\\ Name\end{tabular} & Value             \\ \midrule
        \multirow{3}{*}{\begin{tabular}[c]{@{}l@{}}Auxiliary\\ Loss\end{tabular}}   & dis\_weight\_reg\_lambda                                      & $1\mathrm{e}{-6}$ \\
                                                                                    & conj\_tanh\_out\_reg\_lambda                                  & 0                 \\
                                                                                    & mt\_lambda                                                    & $1\mathrm{e}{-3}$ \\ \hline
        \multirow{4}{*}{\begin{tabular}[c]{@{}l@{}}Delta\\ Scheduling\end{tabular}} & initial\_delta                                                & 0.1               \\
                                                                                    & delta\_decay\_delay                                           & 100               \\
                                                                                    & delta\_decay\_steps                                           & 10                \\
                                                                                    & delta\_decay\_rate                                            & 1.1               \\ \bottomrule
    \end{tabular}
\end{table}

Figure \ref{fig:blackjack-ndnf-mt-3191-soft-policy-cmp-q} shows the policy grids
of a neural DNF-MT actor trained in the Blackjack environment without any
post-training processing, and Figure
\ref{fig:blackjack-ndnf-mt-3191-problog-policy-cmp-q} shows its extracted
ProbLog policy grid. In both figures, a red square indicates that the argmax
action of the agent is different from the baseline Q-table from
\cite{rl-book-sutton-barto}. We see that the extracted ProbLog policy grid shows
more errors (red squares) compared to the original neural DNF-MT actor without
any post-training processing.

\begin{figure}[H]
    \centering
    \includegraphics[width=\columnwidth]{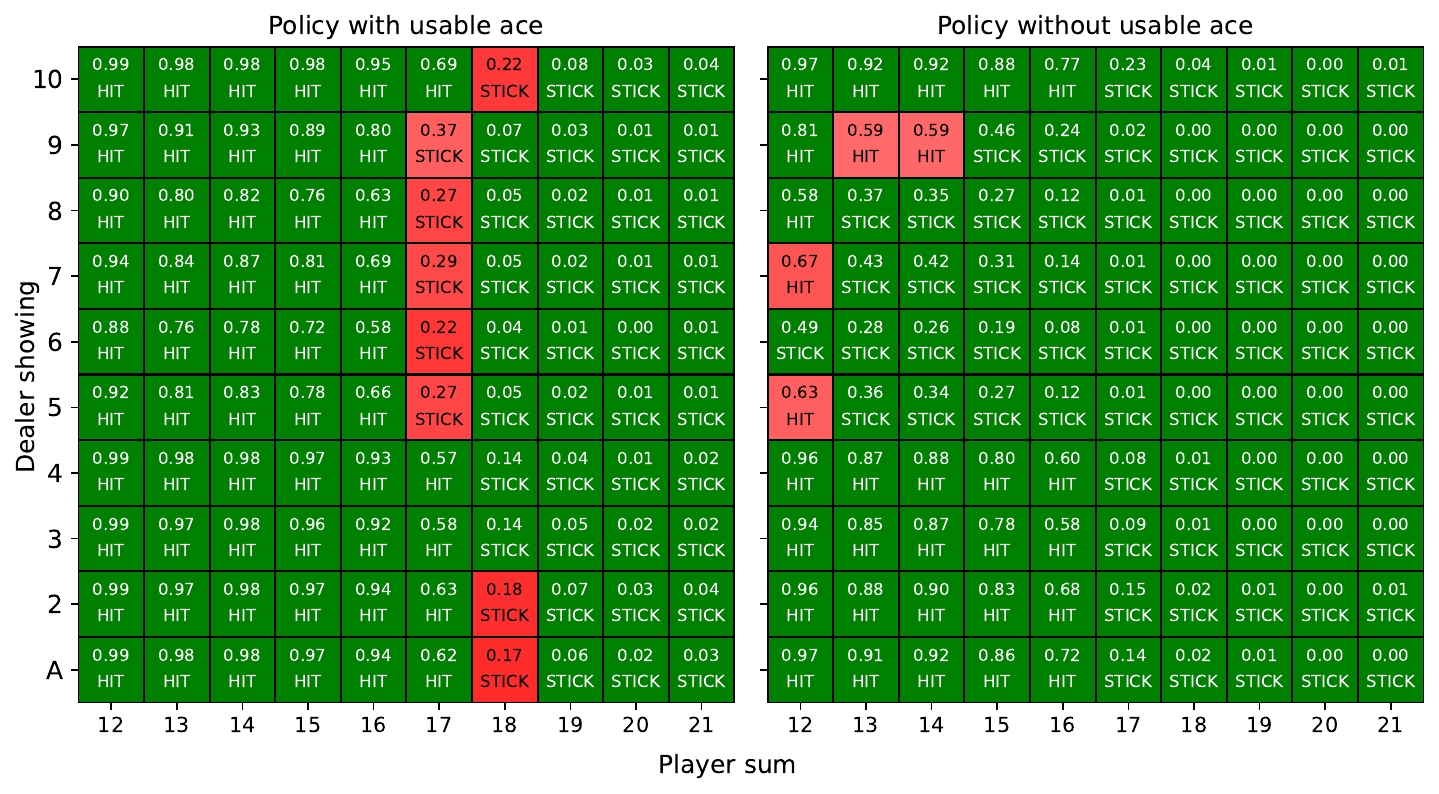}
    \caption{Policy grid of a neural DNF-MT actor in the Blackjack environment
        without post-training processing. Each square shows the probability of
        taking action `Hit' and the argmax action in the text. If a square is
        red, it means that its argmax action is different from the baseline
        Q-table from \cite{rl-book-sutton-barto}. The redder the square, the
        larger the probability difference.}
    \label{fig:blackjack-ndnf-mt-3191-soft-policy-cmp-q}
    \Description{Policy grid of a neural DNF-MT actor in the Blackjack
        environment without post-training processing.}
\end{figure}

\begin{figure}[H]
    \centering
    \includegraphics[width=\columnwidth]{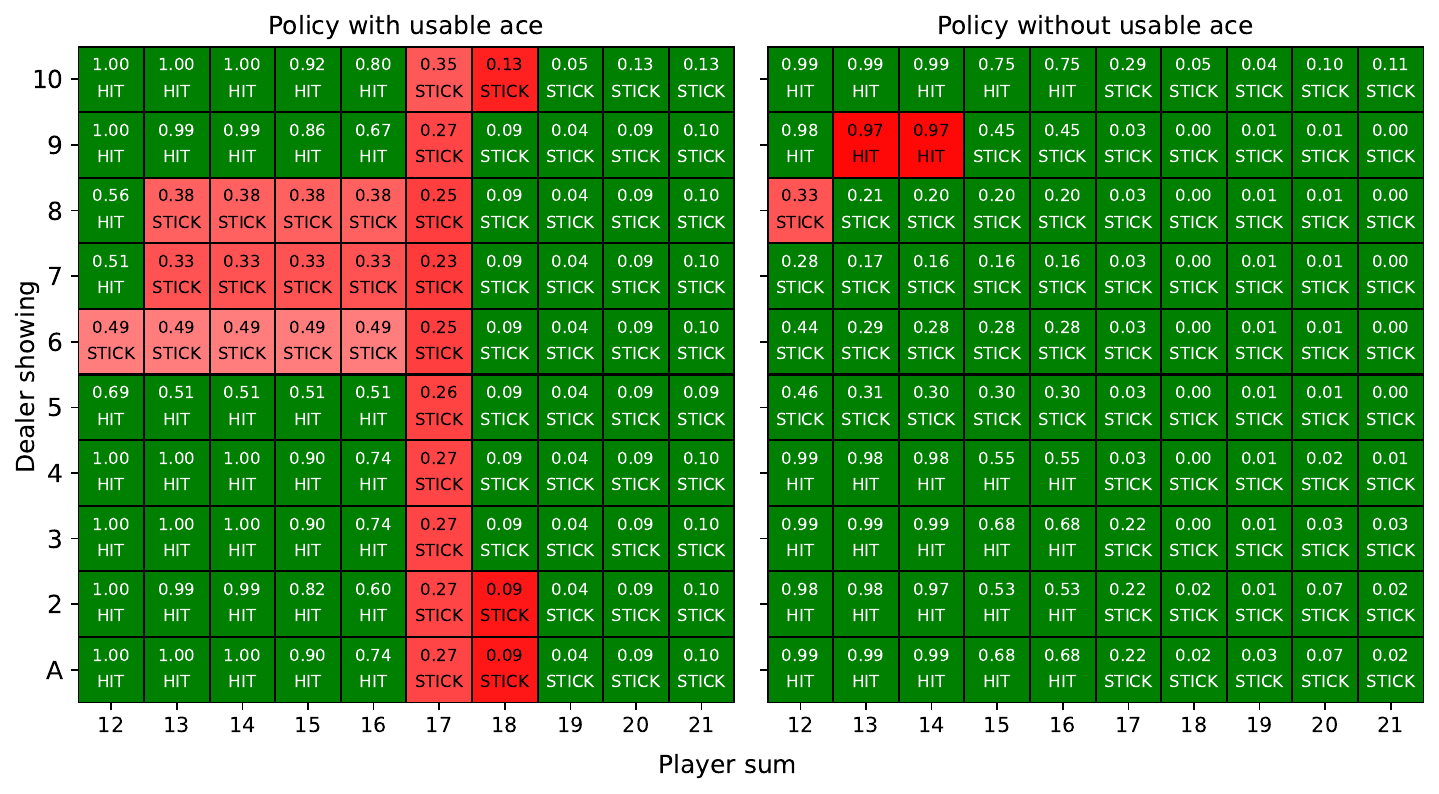}
    \caption{Extracted ProbLog policy grid of the same neural DNF-MT actor in
        Figure \ref{fig:blackjack-ndnf-mt-3191-soft-policy-cmp-q}.}
    \label{fig:blackjack-ndnf-mt-3191-problog-policy-cmp-q}
    \Description{Extracted ProbLog policy grid of the same neural DNF-MT actor
        in Figure \ref{fig:blackjack-ndnf-mt-3191-soft-policy-cmp-q}.}
\end{figure}

The neural DNF-MT actor's ProbLog policy corresponding to
Figure~\ref{fig:blackjack-ndnf-mt-3191-problog-policy-cmp-q} is shown in Listing
\ref{code:blackjack-ndnf-mt-3191-problog}. The entire program contains 37
non-probabilistic rules with `conj\_i' as the rule head, and 201 annotated
disjunctions with action probabilities.

\begin{lstlisting}[
    label={code:blackjack-ndnf-mt-3191-problog},
    caption={ProbLog rules for the neural DNF-MT actor as shown in Figure
        \ref{fig:blackjack-ndnf-mt-3191-problog-policy-cmp-q}, showing the 
        first five annotated disjunctions and the first five conjunction rules.}
]
0.873::action(stick) ; 0.127::action(hit) :- \+conj_3, \+conj_4, \+conj_5, \+conj_6, \+conj_7, \+conj_9, \+conj_10, \+conj_11, \+conj_12, \+conj_14, \+conj_15, \+conj_16, \+conj_19, \+conj_21, \+conj_22, \+conj_25, \+conj_29, \+conj_32, \+conj_34, \+conj_37, \+conj_40, \+conj_42, \+conj_44, \+conj_45, \+conj_47, \+conj_48, \+conj_49, \+conj_50, \+conj_51, \+conj_52, \+conj_53, \+conj_54, \+conj_55, \+conj_56, \+conj_57, \+conj_59, \+conj_62.
0.907::action(stick) ; 0.093::action(hit) :- \+conj_3, \+conj_4, \+conj_5, \+conj_6, \+conj_7, \+conj_9, \+conj_10, \+conj_11, \+conj_12, \+conj_14, \+conj_15, \+conj_16, \+conj_19, \+conj_21, \+conj_22, \+conj_25, \+conj_29, \+conj_32, \+conj_34, \+conj_37, \+conj_40, \+conj_42, \+conj_44, \+conj_45, \+conj_47, \+conj_48, \+conj_49, \+conj_50, \+conj_51, \+conj_52, \+conj_53, \+conj_54, conj_55, \+conj_56, \+conj_57, \+conj_59, \+conj_62.
0.900::action(stick) ; 0.100::action(hit) :- \+conj_3, \+conj_4, \+conj_5, \+conj_6, \+conj_7, \+conj_9, \+conj_10, \+conj_11, \+conj_12, \+conj_14, \+conj_15, \+conj_16, \+conj_19, \+conj_21, \+conj_22, \+conj_25, \+conj_29, \+conj_32, conj_34, \+conj_37, \+conj_40, \+conj_42, \+conj_44, \+conj_45, \+conj_47, \+conj_48, \+conj_49, \+conj_50, \+conj_51, \+conj_52, conj_53, \+conj_54, \+conj_55, \+conj_56, \+conj_57, \+conj_59, \+conj_62.
0.927::action(stick) ; 0.073::action(hit) :- \+conj_3, \+conj_4, \+conj_5, \+conj_6, \+conj_7, \+conj_9, \+conj_10, \+conj_11, \+conj_12, \+conj_14, \+conj_15, \+conj_16, \+conj_19, \+conj_21, \+conj_22, \+conj_25, \+conj_29, \+conj_32, conj_34, \+conj_37, \+conj_40, \+conj_42, \+conj_44, \+conj_45, \+conj_47, \+conj_48, \+conj_49, \+conj_50, \+conj_51, \+conj_52, conj_53, \+conj_54, conj_55, \+conj_56, \+conj_57, \+conj_59, \+conj_62.
0.983::action(stick) ; 0.017::action(hit) :- \+conj_3, \+conj_4, \+conj_5, \+conj_6, \+conj_7, \+conj_9, \+conj_10, \+conj_11, \+conj_12, \+conj_14, \+conj_15, \+conj_16, \+conj_19, \+conj_21, \+conj_22, \+conj_25, \+conj_29, \+conj_32, conj_34, \+conj_37, \+conj_40, \+conj_42, conj_44, \+conj_45, \+conj_47, \+conj_48, \+conj_49, \+conj_50, \+conj_51, \+conj_52, conj_53, \+conj_54, conj_55, \+conj_56, \+conj_57, \+conj_59, \+conj_62.
...
conj_3 :-
    hand(18), \+dealer(9), \+dealer(10), \+usable_ace.
conj_4 :-
    \+hand(17), \+hand(18), \+hand(19), \+hand(20), \+hand(21), \+hand(22), \+hand(28).
conj_5 :-
    hand(20), \+dealer(1), \+dealer(9), \+dealer(10), \+usable_ace.
conj_6 :-
    hand(12), \+dealer(5), usable_ace.
conj_7 :-
    hand(10), \+dealer(2), \+dealer(3), \+dealer(4), \+dealer(6).
...
\end{lstlisting}


\subsection{Taxi} \label{appendix:add-exp-taxi}

The model architectures are listed below:

\begin{table}[H]
    \begin{tabular}{ll} \toprule
        Model                                                                  & Architecture                                                                                                                                                                              \\ \midrule
        MLP actor                                                              & \begin{tabular}[c]{@{}l@{}}(0): Linear(in=500, out=256, bias=True)\\(1): Tanh()\\(2): Linear(in=256, out=6, bias=True)\end{tabular}                                                       \\ \hline
        Critic                                                                 & \begin{tabular}[c]{@{}l@{}}(0): Linear(in=500, out=256, bias=True)\\(1): ReLU()\\(2): Linear(in=256, out=256, bias=True)\\(3): ReLU()\\(4): Linear(in=256, out=1, bias=True)\end{tabular} \\ \hline
        \begin{tabular}[c]{@{}l@{}}MLP oracle\\(for distillation)\end{tabular} & \begin{tabular}[c]{@{}l@{}}(0): Linear(in=500, out=256, bias=False)\\(1): Tanh()\\(2): Linear(in=256, out=6, bias=False)\end{tabular}                                                     \\ \hline
        \begin{tabular}[c]{@{}l@{}}NDNF-MT actor\\(distilled)\end{tabular}     & \begin{tabular}[c]{@{}l@{}}(0): SemiSymbolic(\\\ \ \ \ \ \ in=500, out=64, type=CONJ.)\\(1): SemiSymbolicMutexTanh(\\\ \ \ \ \ \ in=64, out=6, type=DISJ.)\end{tabular}                   \\ \hline
    \end{tabular}
\end{table}

The PPO hyperparameters used for training the MLP actor are listed below:

\begin{table}[H]
    \begin{tabular}{ll} \toprule
        Hyperparameter         & Value             \\ \midrule
        total\_timesteps       & $3\mathrm{e}{6}$  \\
        learning\_rate\_actor  & $2\mathrm{e}{-4}$ \\
        learning\_rate\_critic & $2\mathrm{e}{-3}$ \\
        num\_envs              & 64                \\
        num\_steps             & 2048              \\
        anneal\_lr             & True              \\
        gamma                  & 0.999             \\
        gae\_lambda            & 0.946             \\
        num\_minibatches       & 128               \\
        update\_epochs         & 8                 \\
        norm\_adv              & True              \\
        clip\_coef             & 0.2               \\
        clip\_vloss            & True              \\
        ent\_coef              & 0.003             \\
        vf\_coef               & 0.5               \\
        max\_grad\_norm        & 0.5               \\ \bottomrule
    \end{tabular}
\end{table}

The distillation hyperparameters used for training neural DNF-MT actors are
listed below:

\begin{table}[H]
    \begin{tabular}{lll} \toprule
        \begin{tabular}[c]{@{}l@{}}Hyperparameter\\Group\end{tabular}               & \begin{tabular}[c]{@{}l@{}}Hyperparameter\\ Name\end{tabular} & Value             \\ \midrule
        \multirow{3}{*}{Distilaltion}                                               & batch\_size                                                   & 32                \\
                                                                                    & epoch                                                         & 5000              \\
                                                                                    & learning\_rate                                                & $1\mathrm{e}{-4}$ \\ \hline
        \multirow{3}{*}{\begin{tabular}[c]{@{}l@{}}Auxiliary\\ Loss\end{tabular}}   & dis\_weight\_reg\_lambda                                      & $1\mathrm{e}{-4}$ \\
                                                                                    & conj\_tanh\_out\_reg\_lambda                                  & $1\mathrm{e}{-5}$ \\
                                                                                    & mt\_lambda                                                    & $1\mathrm{e}{-4}$ \\ \hline
        \multirow{4}{*}{\begin{tabular}[c]{@{}l@{}}Delta\\ Scheduling\end{tabular}} & initial\_delta                                                & 0.1               \\
                                                                                    & delta\_decay\_delay                                           & 1000              \\
                                                                                    & delta\_decay\_steps                                           & 100               \\
                                                                                    & delta\_decay\_rate                                            & 1.1               \\ \bottomrule
    \end{tabular}
\end{table}

We provide an example of a neural DNF-MT actor's ProbLog rules in Listing
\ref{code:taxi-distill-ndnf-mt-309-problog}.

\begin{lstlisting}[
    label={code:taxi-distill-ndnf-mt-309-problog},
    caption={ProbLog rules for a distilled neural DNF-MT actor in the Taxi
    environment, showing five annotated disjunctions and first five conjunction rules.}
]
0.595::action(down) ; 0.000::action(up) ; 0.000::action(right) ; 0.405::action(left) ; 0.000::action(pickup) ; 0.000::action(dropoff) :- \+conj_0, \+conj_1, \+conj_3, \+conj_4, conj_5, conj_6, \+conj_7, \+conj_8, \+conj_9, \+conj_11, \+conj_12, \+conj_13, \+conj_14, conj_15, \+conj_16, \+conj_18, \+conj_19, conj_20, conj_21, \+conj_22, conj_23, conj_24, \+conj_25, \+conj_26, conj_27, \+conj_28, \+conj_29, conj_30, \+conj_31, \+conj_33, \+conj_34, \+conj_35, \+conj_37, \+conj_38, \+conj_39, conj_40, \+conj_41, \+conj_42, \+conj_43, \+conj_44, \+conj_45, \+conj_46, \+conj_47, \+conj_48, \+conj_50, \+conj_51, \+conj_52, \+conj_55, \+conj_56, conj_57, \+conj_58, conj_60, \+conj_62, conj_63.
0.602::action(down) ; 0.000::action(up) ; 0.000::action(right) ; 0.398::action(left) ; 0.000::action(pickup) ; 0.000::action(dropoff) :- \+conj_0, \+conj_1, \+conj_3, \+conj_4, conj_5, conj_6, \+conj_7, \+conj_8, \+conj_9, \+conj_11, conj_12, \+conj_13, \+conj_14, conj_15, \+conj_16, \+conj_18, \+conj_19, conj_20, conj_21, \+conj_22, conj_23, conj_24, \+conj_25, \+conj_26, conj_27, \+conj_28, \+conj_29, conj_30, \+conj_31, \+conj_33, \+conj_34, \+conj_35, \+conj_37, \+conj_38, \+conj_39, conj_40, \+conj_41, \+conj_42, \+conj_43, \+conj_44, \+conj_45, \+conj_46, \+conj_47, \+conj_48, \+conj_50, \+conj_51, \+conj_52, \+conj_55, \+conj_56, conj_57, \+conj_58, conj_60, \+conj_62, conj_63.
0.963::action(down) ; 0.000::action(up) ; 0.000::action(right) ; 0.036::action(left) ; 0.000::action(pickup) ; 0.001::action(dropoff) :- \+conj_0, \+conj_1, \+conj_3, \+conj_4, conj_5, conj_6, \+conj_7, \+conj_8, \+conj_9, \+conj_11, conj_12, \+conj_13, \+conj_14, conj_15, \+conj_16, \+conj_18, \+conj_19, conj_20, conj_21, \+conj_22, conj_23, conj_24, \+conj_25, \+conj_26, conj_27, \+conj_28, \+conj_29, conj_30, \+conj_31, \+conj_33, \+conj_34, \+conj_35, \+conj_37, \+conj_38, \+conj_39, conj_40, \+conj_41, \+conj_42, \+conj_43, \+conj_44, \+conj_45, \+conj_46, \+conj_47, conj_48, \+conj_50, \+conj_51, \+conj_52, \+conj_55, \+conj_56, \+conj_57, \+conj_58, conj_60, \+conj_62, conj_63.
0.000::action(down) ; 0.000::action(up) ; 0.000::action(right) ; 0.000::action(left) ; 1.000::action(pickup) ; 0.000::action(dropoff) :- conj_0, conj_1, conj_3, conj_4, conj_5, conj_6, \+conj_7, \+conj_8, \+conj_9, conj_11, conj_12, \+conj_13, \+conj_14, conj_15, \+conj_16, \+conj_18, conj_19, conj_20, conj_21, \+conj_22, conj_23, conj_24, \+conj_25, \+conj_26, conj_27, conj_28, \+conj_29, \+conj_30, \+conj_31, \+conj_33, \+conj_34, \+conj_35, \+conj_37, \+conj_38, conj_39, \+conj_40, \+conj_41, \+conj_42, \+conj_43, conj_44, \+conj_45, \+conj_46, \+conj_47, conj_48, \+conj_50, \+conj_51, conj_52, \+conj_55, \+conj_56, conj_57, \+conj_58, conj_60, \+conj_62, conj_63.
0.000::action(down) ; 0.000::action(up) ; 0.000::action(right) ; 0.000::action(left) ; 0.000::action(pickup) ; 1.000::action(dropoff) :- \+conj_0, conj_1, conj_3, \+conj_4, \+conj_5, conj_6, \+conj_7, \+conj_8, \+conj_9, conj_11, conj_12, \+conj_13, \+conj_14, \+conj_15, \+conj_16, \+conj_18, \+conj_19, conj_20, \+conj_21, \+conj_22, conj_23, \+conj_24, \+conj_25, \+conj_26, conj_27, \+conj_28, conj_29, \+conj_30, \+conj_31, \+conj_33, \+conj_34, \+conj_35, \+conj_37, \+conj_38, conj_39, conj_40, \+conj_41, \+conj_42, \+conj_43, conj_44, \+conj_45, \+conj_46, \+conj_47, conj_48, \+conj_50, \+conj_51, \+conj_52, \+conj_55, \+conj_56, \+conj_57, \+conj_58, conj_60, \+conj_62, conj_63.
...
conj_0 :- \+state(4), \+state(9), \+state(11), \+state(14), \+state(16), \+state(24), \+state(26), \+state(28), \+state(33), \+state(52), \+state(54), \+state(59), \+state(61), \+state(68), \+state(72), \+state(76), \+state(78), \+state(79), \+state(81), \+state(82), \+state(89), \+state(94), \+state(96), \+state(97), \+state(104), \+state(107), \+state(112), \+state(113), \+state(114), \+state(128), \+state(129), \+state(131), \+state(134), \+state(139), \+state(152), \+state(159), \+state(162), \+state(171), \+state(176), \+state(178), \+state(182), \+state(189), \+state(191), \+state(193), \+state(194), \+state(279), \+state(311), \+state(379), \+state(394), \+state(443), \+state(479), \+state(489).
conj_1 :- \+state(14), \+state(21), \+state(28), \+state(61), \+state(62), \+state(63), \+state(68), \+state(69), \+state(76), \+state(78), \+state(81), \+state(82), \+state(88), \+state(96), \+state(101), \+state(121), \+state(128), \+state(131), \+state(136), \+state(138), \+state(144), \+state(162), \+state(163), \+state(166), \+state(168), \+state(176), \+state(177), \+state(178), \+state(182), \+state(186), \+state(187), \+state(188), \+state(189), \+state(191), \+state(193), \+state(196), \+state(238), \+state(246), \+state(256), \+state(266), \+state(267), \+state(276), \+state(278), \+state(281), \+state(286), \+state(298), \+state(303), \+state(327), \+state(331), \+state(332), \+state(333), \+state(339), \+state(341), \+state(342), \+state(343), \+state(344), \+state(348), \+state(349), \+state(351), \+state(352), \+state(353), \+state(354), \+state(359), \+state(363), \+state(366), \+state(367), \+state(382), \+state(383), \+state(384), \+state(388), \+state(391), \+state(394), \+state(396), \+state(398), \+state(401), \+state(402), \+state(404), \+state(406), \+state(412), \+state(413), \+state(414), \+state(423), \+state(427), \+state(428), \+state(431), \+state(434), \+state(437), \+state(438), \+state(439), \+state(441), \+state(443), \+state(448), \+state(449), \+state(451), \+state(452), \+state(454), \+state(457), \+state(458), \+state(459), \+state(462), \+state(464), \+state(466), \+state(468), \+state(469), \+state(481), \+state(483), \+state(484), \+state(488), \+state(489), \+state(491), \+state(496), \+state(498).
conj_3 :- \+state(6), \+state(11), \+state(14), \+state(17), \+state(24), \+state(27), \+state(28), \+state(32), \+state(33), \+state(38), \+state(51), \+state(52), \+state(54), \+state(58), \+state(59), \+state(61), \+state(68), \+state(72), \+state(76), \+state(78), \+state(79), \+state(81), \+state(88), \+state(91), \+state(93), \+state(94), \+state(96), \+state(104), \+state(107), \+state(114), \+state(117), \+state(128), \+state(129), \+state(131), \+state(132), \+state(133), \+state(134), \+state(139), \+state(152), \+state(156), \+state(159), \+state(162), \+state(169), \+state(171), \+state(173), \+state(176), \+state(178), \+state(182), \+state(183), \+state(189), \+state(191), \+state(192), \+state(193), \+state(194), \+state(199), \+state(211), \+state(293), \+state(294), \+state(308), \+state(309), \+state(311), \+state(318), \+state(379), \+state(392), \+state(393), \+state(394), \+state(414), \+state(436), \+state(439), \+state(452).
conj_4 :- \+state(4), \+state(6), \+state(8), \+state(9), \+state(12), \+state(14), \+state(16), \+state(18), \+state(19), \+state(24), \+state(26), \+state(27), \+state(29), \+state(32), \+state(33), \+state(34), \+state(37), \+state(38), \+state(39), \+state(42), \+state(43), \+state(48), \+state(51), \+state(56), \+state(59), \+state(61), \+state(68), \+state(71), \+state(73), \+state(74), \+state(76), \+state(79), \+state(82), \+state(83), \+state(88), \+state(91), \+state(92), \+state(93), \+state(94), \+state(97), \+state(99), \+state(101), \+state(104), \+state(108), \+state(109), \+state(111), \+state(112), \+state(113), \+state(117), \+state(119), \+state(121), \+state(126), \+state(127), \+state(129), \+state(131), \+state(132), \+state(133), \+state(134), \+state(136), \+state(137), \+state(139), \+state(141), \+state(143), \+state(146), \+state(148), \+state(151), \+state(158), \+state(162), \+state(166), \+state(169), \+state(171), \+state(172), \+state(173), \+state(174), \+state(176), \+state(177), \+state(178), \+state(179), \+state(183), \+state(184), \+state(186), \+state(187), \+state(189), \+state(191), \+state(192), \+state(193), \+state(194), \+state(201), \+state(203), \+state(208), \+state(211), \+state(218), \+state(222), \+state(266), \+state(267), \+state(272), \+state(273), \+state(274), \+state(277), \+state(284), \+state(286), \+state(293), \+state(294), \+state(297), \+state(299), \+state(301), \+state(302), \+state(303), \+state(304), \+state(306), \+state(307), \+state(309), \+state(311), \+state(312), \+state(313), \+state(314), \+state(321), \+state(322), \+state(323), \+state(328), \+state(332), \+state(334), \+state(338), \+state(339), \+state(341), \+state(343), \+state(344), \+state(346), \+state(347), \+state(351), \+state(352), \+state(353), \+state(361), \+state(362), \+state(368), \+state(369), \+state(371), \+state(372), \+state(373), \+state(382), \+state(383), \+state(384), \+state(386), \+state(389), \+state(391), \+state(392), \+state(393), \+state(394), \+state(396), \+state(397), \+state(401), \+state(402), \+state(403), \+state(404), \+state(406), \+state(407), \+state(414), \+state(418), \+state(419), \+state(421), \+state(422), \+state(427), \+state(429), \+state(433), \+state(436), \+state(438), \+state(442), \+state(444), \+state(446), \+state(447), \+state(449), \+state(451), \+state(453), \+state(454), \+state(456), \+state(457), \+state(458), \+state(459), \+state(461), \+state(462), \+state(463), \+state(467), \+state(468), \+state(471), \+state(477), \+state(482), \+state(486), \+state(487), \+state(489), \+state(491), \+state(496), \+state(497), \+state(498).
conj_5 :- \+state(16), \+state(97), \+state(123), \+state(136), \+state(479).
...
\end{lstlisting}

\subsection{Door Corridor} \label{appendix:add-exp-dc}

The model architectures are listed below:

\begin{table}[H]
    \begin{tabular}{ll} \toprule
        Model                                                    & Architecture                                                                                                                                                          \\ \midrule
        \begin{tabular}[c]{@{}l@{}}Feature\\Encoder\end{tabular} & \begin{tabular}[c]{@{}l@{}}(0): Conv2d(2, 4, kernel\_size=(1, 1), stride=(1, 1))\\(1): Tanh()\\(2): Linear(in=36, out=16, bias=True)\end{tabular}                     \\ \hline
        MLP actor                                                & \begin{tabular}[c]{@{}l@{}}(0): Linear(in=16, out=64, bias=True)\\(1): Tanh()\\(2): Linear(in=64, out=4, bias=True)\end{tabular}                                      \\ \hline
        \begin{tabular}[c]{@{}l@{}}NDNF-MT\\Actor\end{tabular}   & \begin{tabular}[c]{@{}l@{}}(0): SemiSymbolic(\\\ \ \ \ \ \ in=16, out=12, type=CONJ.)\\(1): SemiSymbolicMutexTanh(\\\ \ \ \ \ \ in=12 out=4, type=DISJ.)\end{tabular} \\ \hline
        Critic                                                   & \begin{tabular}[c]{@{}l@{}}(0): Linear(in=16, out=64, bias=True)\\(1): Tanh()\\(2): Linear(in=64, out=1, bias=True)\end{tabular}                                      \\ \bottomrule
    \end{tabular}
\end{table}

The PPO hyperparameters used for training both the MLP actor and the neural
DNF-MT actor are listed below:

\begin{table}[H]
    \begin{tabular}{ll} \toprule
        Hyperparameter   & Value             \\ \midrule
        total\_timesteps & $3\mathrm{e}{5}$  \\
        learning\_rate   & $1\mathrm{e}{-2}$ \\
        num\_envs        & 8                 \\
        num\_steps       & 64                \\
        anneal\_lr       & True              \\
        gamma            & 0.99              \\
        gae\_lambda      & 0.95              \\
        num\_minibatches & 8                 \\
        update\_epochs   & 4                 \\
        norm\_adv        & True              \\
        clip\_coef       & 0.3               \\
        clip\_vloss      & True              \\
        ent\_coef        & 0.1               \\
        vf\_coef         & 1                 \\
        max\_grad\_norm  & 0.5               \\ \bottomrule
    \end{tabular}
\end{table}

For the neural DNF-MT actor, the hyperparameters of the auxiliary losses and $\delta$ delay scheduling
used are listed below:

\begin{table}[H]
    \begin{tabular}{lll} \toprule
        \begin{tabular}[c]{@{}l@{}}Hyperparameter\\Group\end{tabular}               & \begin{tabular}[c]{@{}l@{}}Hyperparameter\\ Name\end{tabular} & Value              \\ \midrule
        \multirow{4}{*}{\begin{tabular}[c]{@{}l@{}}Auxiliary\\ Loss\end{tabular}}   & dis\_weight\_reg\_lambda                                      & 0                  \\
                                                                                    & conj\_tanh\_out\_reg\_lambda                                  & 0                  \\
                                                                                    & mt\_lambda                                                    & $1\mathrm{e}{-3}$  \\
                                                                                    & embedding\_reg\_lambda                                        & $3\mathrm{e}{-15}$ \\ \hline
        \multirow{4}{*}{\begin{tabular}[c]{@{}l@{}}Delta\\ Scheduling\end{tabular}} & initial\_delta                                                & 0.1                \\
                                                                                    & delta\_decay\_delay                                           & 50                 \\
                                                                                    & delta\_decay\_steps                                           & 10                 \\
                                                                                    & delta\_decay\_rate                                            & 1.1                \\ \bottomrule
    \end{tabular}
\end{table}

The code repo provides a notebook on policy intervention in DC-T and DC-OT.
The notebook's path is \lstinline{notebooks/Door Corridor PPO NDNF-MT-6731 Intervention.ipynb}
(\href{https://anonymous.4open.science/r/ndnf_rl-B08D/notebooks/Door%20Corridor%20PPO%20NDNF-MT-6731%20Intervention.ipynb}{\textit{link to notebook}}).

  \section{Run Time Comparison} \label{appendix:add-disc-run-time}

Table \ref{tab:run-time-cmp} shows the run time comparison between different
models in different environments. All entries are run on a 10-core Apple M1 Pro
CPU with 16G RAM, and we use the ProbLog Python API to perform inference.

\begin{table}[H]
    \caption{Run time comparison between different models in different
        environments. The overall run time is the total time to run the model
        for the specified number of episodes. Models without * are run in a
        single environment in a loop for $n$ times. Models with * are run in 8
        parallel environments for a total of $n$ episodes.}
    \label{tab:run-time-cmp}
    \begin{tabular}{lllll}  \toprule
        Env.                                                                     & Model    & \begin{tabular}[c]{@{}l@{}}No. \\ episodes\end{tabular} & \begin{tabular}[c]{@{}l@{}}Overall\\ run\\ time (s)\end{tabular} & \begin{tabular}[c]{@{}l@{}}Avg.\\ run time (s)\\ per epidose\end{tabular} \\ \midrule
        \multirow{6}{*}{Blackjack}                                               & Q-table  & \multirow{5}{*}{$1\mathrm{e}{4}$}                       & 0.416                                                            & $4.156\mathrm{e}{-5}$                                                     \\
                                                                                 & MLP      &                                                         & 2.283                                                            & $2.283\mathrm{e}{-4}$                                                     \\
                                                                                 & MLP*     &                                                         & 1.157                                                            & $1.157\mathrm{e}{-4}$                                                     \\
                                                                                 & NDNF-MT  &                                                         & 5.564                                                            & $5.564\mathrm{e}{-4}$                                                     \\
                                                                                 & NDNF-MT* &                                                         & 1.574                                                            & $1.574\mathrm{e}{-4}$                                                     \\ \cline{3-5}
                                                                                 & ProbLog  & 10                                                      & 21.583                                                           & $2.158$                                                                   \\ \hline
        \multirow{6}{*}{Taxi}                                                    & Q-table  & \multirow{5}{*}{$1\mathrm{e}{4}$}                       & 1.202                                                            & $1.202\mathrm{e}{-4}$                                                     \\
                                                                                 & MLP      &                                                         & 17.338                                                           & $1.734\mathrm{e}{-3}$                                                     \\
                                                                                 & MLP*     &                                                         & 7.279                                                            & $7.279\mathrm{e}{-4}$                                                     \\
                                                                                 & NDNF-MT  &                                                         & 49.954                                                           & $4.995\mathrm{e}{-3}$                                                     \\
                                                                                 & NDNF-MT* &                                                         & 9.790                                                            & $9.790\mathrm{e}{-4}$                                                     \\ \cline{3-5}
                                                                                 & ProbLog  & 1                                                       & \multicolumn{2}{c}{Timed out after 30 min}                                                                                                   \\ \hline
        \multirow{5}{*}{\begin{tabular}[c]{@{}l@{}}Door\\ Corridor\end{tabular}} & MLP      & \multirow{4}{*}{$1\mathrm{e}{4}$}                       & 10.178                                                           & $1.018\mathrm{e}{-3}$                                                     \\
                                                                                 & MLP*     &                                                         & 0.073                                                            & $7.258\mathrm{e}{-6}$                                                     \\
                                                                                 & NDNF-MT  &                                                         & 36.726                                                           & $3.673\mathrm{e}{-3}$                                                     \\
                                                                                 & NDNF-MT* &                                                         & 0.111                                                            & $1.112\mathrm{e}{-5}$                                                     \\ \cline{3-5}
                                                                                 & ASP      & 1000                                                    & 25.296                                                           & $2.530\mathrm{e}{-2}$                                                     \\ \bottomrule
    \end{tabular}
\end{table}

We also run an experiment to approximate the time taken to perform inference in
ProbLog when the program is large. All programs are in the following form:

\balance

\begin{lstlisting}
p_0_1 :: action(0) ; ... ; p_0_n :: action(n) :- rule(0).
p_1_1 :: action(0) ; ... ; p_1_n :: action(n) :- rule(1).
...
p_m_1 :: action(0) ; ... ; p_m_n :: action(n) :- rule(m).
\end{lstlisting}

We try with $n \in \{2, 6\}, m \in \{1..18\}$. The time taken is shown in Figure
\ref{fig:time-problog-rules}, and we observe the overall trend of exponential
growth in time to the number of rules. This is only for approximation since the
actual policy does not resemble the format we use here. A Problog program
extracted from a neural DNF-MT actor in a Blackjack environment has 58 annotated
disjunctions. It takes 2652.212s to perform 380 times of inference, averaging 7s
per inference. Regardless, we can still conclude that inference with ProbLog is
more expensive than with neural models.

\begin{figure}[H]
    \centering
    \includegraphics[width=\columnwidth]{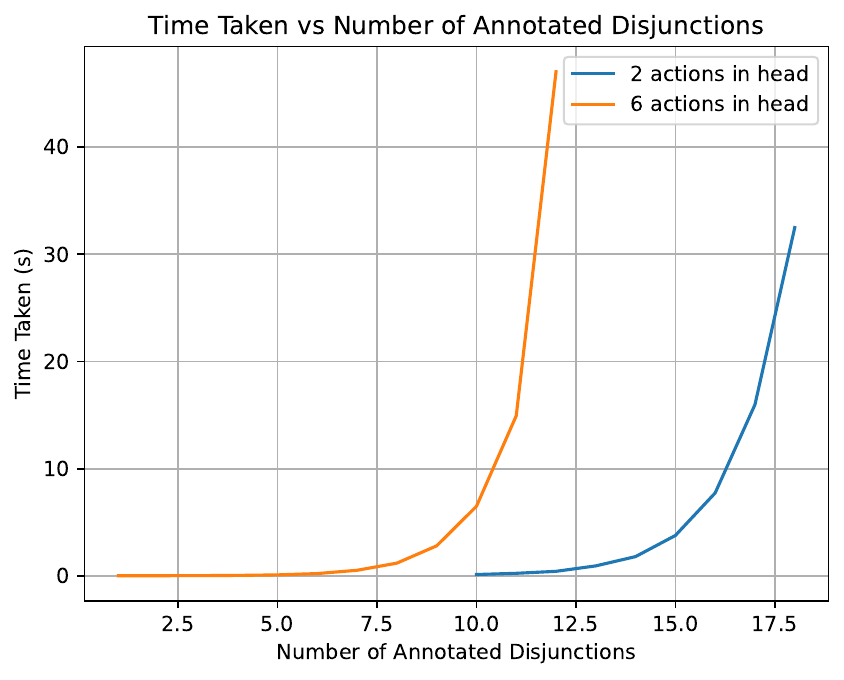}
    \caption{Time taken vs the number of annotated disjunctions in ProbLog,
        using a ProbLog program in a certain format as an approximation. The
        overall trend is that the time taken increases exponentially with the
        number of rules presented in the ProbLog program. However, we cannot
        confirm this pattern is the same as inference with an actual policy
        extracted from a neural DNF-MT actor.}
    \label{fig:time-problog-rules}
    \Description{Time taken vs number of annotated disjunctions in ProbLog. }
\end{figure}

\section{Performance Loss Due to Thresholding}

We discussed briefly in Section \ref{sec:discussions} on the performance loss
due to thresholding during the post-training processing. Here we show a further
insight on why this happens.

We go back to the derivation of the semi-symbolic node in pix2rule
\cite{pix2rule}. Say we have a conjunctive node with $N$ inputs, and each $x_i$
is strictly $\pm1$. The node is in a form of:

\begin{equation*}
    \sum_i^N w_i x_i + \beta = z
\end{equation*}

Recall the conjunction conditions: (1) if all atoms are true, the conjunction is
true; and (2) if any of the atoms is false, the conjunction is false. These are
formulated as:
\begin{equation} \label{eq:appendix-min-proof-1}
    \sum_i^N |w_i| + \beta = z
\end{equation}
\begin{equation} \label{eq:appendix-min-proof-2}
    \sum_{i, i \neq j}^N |w_i| - |w_j| + \beta = -z
\end{equation}
where $z$ is positive.

Combine Equation (\ref{eq:appendix-min-proof-1}) and
(\ref{eq:appendix-min-proof-2}):

\begin{align*}
    -2 \beta &= \sum_i^N |w_i| + \sum_{i, i \neq j}^N |w_i| - |w_j| \\
    -2 \beta &= 2 \sum_i^N |w_i| - 2 |w_j| \\
    \beta &= |w_j| - \sum_i^N |w_i|
\end{align*}

We use this as a starting point of determining what $\beta$ should be. Say that
there are more than one atoms in the conjunction being false, the conjunction
should remain false. So the output should be less than 0. Let $\beta =
\alpha - \sum_i^N |w_i|$, we calculate the output of the node:
\begin{align}
    \sum_{i \text{ s.t. } w_i x_i > 0}^N |w_i| - \sum_{j \text{ s.t. } w_j x_j < 0}^N |w_j| + \beta &< 0 \nonumber \\
    \sum_{i \text{ s.t. } w_i x_i > 0}^N |w_i| - \sum_{j \text{ s.t. } w_j x_j < 0}^N |w_j| + \alpha - \sum_i^N |w_i| &< 0 \nonumber \\
    \alpha - 2 \sum_{j \text{ s.t. } w_j x_j < 0} |w_j| &< 0 \nonumber\\
    \alpha < 2 \sum_{j \text{ s.t. } w_j x_j < 0} |w_j| \label{eq:appendix-min-proof-3}
\end{align}

Note that $\alpha - 2 \sum_{j \text{ s.t. } w_j x_j < 0} |w_j|$ can be see as
the output of a conjunctive node in general.

Going back to conjunction condition (2), the minimum case will be there is only
one atom being false in the conjunction. So Inequality
(\ref{eq:appendix-min-proof-3}) becomes:
\begin{equation}\label{eq:appendix-min-proof-4}
    \alpha < 2 |w_j|
\end{equation}
where $j$ is the index of the single atom being false.

So a suitable value of $\alpha$ for Equation (\ref{eq:appendix-min-proof-4})
would be $\min_{i, w_i \neq 0} |w_i|$. If $\min_{i, w_i \neq 0} |w_i|$ happen to
be the same value of $|w_j|$, we still have inequality $|w_j| < 2 |w_j|$. If
$w_j = 0$, then we go back to the case in Equation
(\ref{eq:appendix-min-proof-1}), and knowing the output of the layer is $\alpha
- 2 \sum_{j \text{ s.t. } w_j x_j < 0} |w_j|$, we know that the output would
still be greater than 0. If we take $\alpha = \min_{i, w_i \neq 0} |w_i|$ and
substitute it into Inequality (\ref{eq:appendix-min-proof-3}), the inequality
should always hold, as we have verified the base case of only one negation, and
adding more to the R.H.S would not change the result of the inequality. To
conclude, a conjunctive node's bias can be:
\begin{equation}
    \beta = \min_{i, w_i \neq 0} |w_i| - \sum_i^N |w_i|
\end{equation}

This is not the same as the $\max$ version of the bias proposed in pix2rule
\cite{pix2rule}, i.e. $\beta = \max_{i} |w_i| - \sum_i^N |w_i|$. The choice of
using max or min doesn't affect the logical semantics if all the weights are
strictly $\pm6$. However, in reality, we find that the weights are not all equal
but represent some form of `importance' of the input or how much the input
contributes to the belief, as shown in Section~\ref{sec:discussions}. Take the
following example where we have a conjunctive node with weights $[3, 1, 1]$ and
inputs $[1, -1, 1]$. The output of the node with max version of the bias is:
\begin{align*}
    &3 \cdot 1 + 1 \cdot (-1) + 1 \cdot 1 + \beta_{\max} \\
    =& 3 - 1 + 1 + (3 - (3 + 1 + 1)) \\
    =& 1 \nless 0
\end{align*}
Although we expect the node output to be false (i.e. $< 0$) as there is one atom
being false, the output layer is greater than 0 and interpreted as true. If we
use the min version of the bias, we have the expected behaviour:
\begin{align*}
    &3 \cdot 1 + 1 \cdot (-1) + 1 \cdot 1 + \beta_{\min} \\
    =& 3 - 1 + 1 + (1 - (3 + 1 + 1)) \\
    =& -1 < 0
\end{align*}

The max version of the bias is a more relaxed form of logic with the flexibility
to weigh inputs differently based on their `importance', and since the
thresholding does not consider this weighting, it might change the behaviour of
the node depends on the scale of the weights. On the other hand, the min version
of the bias is a stricter form of logic that is closer to bivalent logic, since
it does not consider the scale of the weights and thus the importance of inputs.
One may prefer the min version of the bias as it is stricter and closer to
bivalent logic. However, the stricter symbolic constraint also makes the
training harder. We find that the max version is easier to train and converge to
solutions in general. We leave the min version of the bias as a future work to
explore the trade-off between the two versions of the bias.

\end{appendix}


\end{document}